\documentclass[twocolumn, a4paper,10pt]{article}
\pdfoutput=1
\usepackage[top=2.5cm, bottom=2.5cm, left=2.0cm, right=2.0cm,
columnsep=0.8cm]{geometry}

\usepackage{cite}
\usepackage{amsmath,amssymb,amsfonts}
\usepackage{algorithmic}
\usepackage{graphicx}
\usepackage{textcomp}
\usepackage{xcolor}
\def\BibTeX{{\rm B\kern-.05em{\sc i\kern-.025em b}\kern-.08em
    T\kern-.1667em\lower.7ex\hbox{E}\kern-.125emX}}
    

\usepackage{mathrsfs} 
\usepackage{tcolorbox,todonotes}     
\usepackage{xspace} 
\usepackage{amsfonts,amsmath,amsthm} 
\usepackage{algorithm}
\usepackage{algorithmic}
\usepackage{url}
\usepackage{ctable}
\usepackage{subfigure}
 
\newtheorem{theorem}{Theorem} 
\newtheorem{lemma}{Lemma}
\newtheorem{definition}{Definition} 
 \newtheorem{proposition}[theorem]{Proposition}
    
\newcommand{\bfA}{\mathbf{A}} 
\newcommand{\bfB}{\mathbf{B}}

\newcommand{\bfU}{\mathbf{U}}
\newcommand{\bfV}{\mathbf{V}}

\newcommand{\bfa}{\mathbf{a}} 
\newcommand{\bfb}{\mathbf{b}} 
\newcommand{\bfc}{\mathbf{c}}

\newcommand{\bfu}{\mathbf{u}}

\newcommand{\bfz}{\mathbf{z}}

\newcommand{\bfx}{\mathbf{x}}

\newcommand{\bfy}{\mathbf{y}}

\newcommand{\cR}{\mathcal{R}}


\newcommand{\OO}{{\mathcal O}}

\newcommand{\RR}{{\mathcal R}}

\newcommand{\ar}[2]{r}

\newcommand{\defproblem}[3]{
  \vspace{1mm}
\noindent\fbox{
  \begin{minipage}{.45\textwidth}
  \begin{tabular*}{\textwidth}{@{\extracolsep{\fill}}lr} \textsc{#1} \\ \end{tabular*}
  {\bf{Input:}} #2  \\
  {\bf{Task:}} #3
  \end{minipage}
  }
  \vspace{1mm}
}

 \newcommand{\bmfgfr}{\ProblemName{$\Bbb{F}_2$-MF}}

\newcommand{\rclustering}{{\sc Binary Constrained Clustering}\xspace}

\newcommand{\bmfbrfull}{{\sc  Boolean Matrix Factorization}\xspace}
\newcommand{\bmfbr}{{\sc BMF}\xspace}

\newcommand{\kmclust}{{\sc   $k$-Means Clustering}\xspace}

\newcommand{\argmin}{{\rm argmin}\xspace}
\newcommand{\rank}{{\rm rank}\xspace}
\newcommand{\Brank}{{\rm Boolean }\text{{\rm -rank}}\xspace}

\newcommand{\hdist}{d_H}
\newcommand{\Hdist}{{\operatorname{cost}}}

 \newcommand{\pname}{\textsc}
\newcommand{\ProblemFormat}[1]{\pname{#1}}
\newcommand{\ProblemIndex}[1]{\index{problem!\ProblemFormat{#1}}}
\newcommand{\ProblemName}[1]{\ProblemFormat{#1}\ProblemIndex{#1}{}\xspace}

  \newcommand{\probgenfull}[2]{\ProblemName{GF($#1$)-Matrix $\ell_{#2}$-norm Factorization}}
\newcommand{\probgen}[2]{\ProblemName{$\Bbb{F}_{#1}$-$\ell_{#2}$-MF}}

\newcommand{\algoclus}[2]{{\sf ConClustering($#1,#2$)}}
\newcommand{\algolargeclus}[2]{{\sf LargeConClustering($#1,#2$)}}

\newcommand{\clustering}[2]{\ProblemName{Constrained $(#1,#2)$-Clustering}}

\begin{document}

\title{Boolean and $\Bbb{F}_p$-Matrix Factorization: From Theory to Practice\thanks{The research received funding from the Research Council of Norway via the project BWCA (grant no. 314528) and IIT Hyderabad via Seed grant (SG/IITH/F224/2020-21/SG-79).  The work is conducted while Anurag Patil and Adil Tanveer were students at IIT Hyderabad.}}

\author{Fedor Fomin\footnote{Department  of Informatics, University of Bergen,  Norway} \and Fahad Panolan\footnote{Department of Computer Science and Engineering, IIT Hyderabad, India} \and  Anurag Patil\footnote{EdgeVerve Systems Limited,  Bengaluru,  India} \and Adil Tanveer\footnote{Amazon,  Chennai,  India}}

%

\date{}





\maketitle

\begin{abstract}  
Boolean Matrix Factorization (BMF) aims
 to find an approximation of a given binary matrix as the Boolean product of two low-rank binary matrices.
Binary data is ubiquitous in many fields, and representing data by binary matrices is common in medicine, natural language processing, bioinformatics, computer graphics, among many others.  Factorizing a matrix into low-rank matrices is used to gain more information about the data, like discovering relationships between the features and samples, roles and users, topics and articles, etc. In many applications, the binary nature of the factor matrices could enormously increase the interpretability of the data.

Unfortunately, BMF is computationally hard  and heuristic algorithms are used to compute Boolean factorizations. Very recently, the theoretical breakthrough was obtained independently by two research groups. Ban et al. (SODA 2019) and Fomin et al. (Trans. Algorithms 2020) show that BMF admits an efficient polynomial-time approximation scheme (EPTAS). However, despite the theoretical importance, the high double-exponential dependence of the running times from the rank makes these algorithms unimplementable in practice.  The primary research question motivating our work is whether the theoretical advances on BMF could lead to practical algorithms.

The main conceptional contribution of our work is the following.  While EPTAS for BMF is a purely theoretical advance, the general approach behind these algorithms could serve as the basis in designing better heuristics. We also use this strategy to develop new algorithms for related $\Bbb{F}_p$-Matrix Factorization. Here, given a matrix $\bfA$ over a finite field GF($p$) where $p$ is a prime, and an integer $r$,  our objective is to find a matrix $\bfB$ over the same field with GF($p$)-rank at most $r$ minimizing some norm of $\bfA-\bfB$.   
Our empirical research on synthetic and real-world data demonstrates the advantage of the new algorithms over previous works on BMF and $\Bbb{F}_p$-Matrix Factorization. 
\end{abstract}


\section{Introduction}



 Low-rank matrix approximation (matrix factorization) is a widely used method of compressing a matrix by reducing its dimension. It is an essential component of various data analysis techniques, including Principal Component Analysis (PCA), 
  the most popular and successful techniques used for dimension reduction in data analysis and machine learning \cite{pearson1901liii,hotelling1933analysis,eckart1936approximation}.
Low-rank matrix approximation is also a common tool  in 
 factor analysis for extracting latent features from data \cite{spearman1961general}.  
    
In the low-rank matrix approximation problem,  
we are given an   $m\times n$ real-valued matrix $\bfA$,  and the objective is to 
approximate $\bfA$ by a product of two low-rank matrices, or factors, $\bfU\cdot \bfV$, where $\bfU$ is a  $m\times r$ and  $\bfV$ is a  $r\times n$ matrix, and $r\ll m,n$. 
Equivalently, for an input  $m\times n$ data matrix $\bfA$ and $r\in {\mathbb N}$,   we seek an $m\times n$ matrix $\bfB$ of rank $r$ that 
approximates $\bfB$.
%
%
%
%
%
 By the Eckart-Young-Mirsky theorem,  best low-rank approximation could be found  via Singular Value Decomposition (SVD)~\cite{eckart1936approximation,MR0114821}.  However, SVD works only when no constraints are imposed on factor matrices $\bfU$ and  $\bfV$, and approximation is measured by the  Frobenius norm of $\bfA-\bfU\cdot \bfV$. 
In many application with binary data  when factorization is used as a pre-processing step or dimension reduction, it could be desirable to  run subsequent methods on  binary inputs.  Also in certain application domains  binary matrices are more interpretable \cite{ijcai2020-685}. 
However,   the desire  to ``keep the data binary''  makes the problem of factorization way more computationally challenging. Similar situation occurs with factorizing matrices  over   a finite field GF($p$). 

%
%
 
The large number of applications requiring Boolean or binary matrix factorization has given raise to many interesting heuristic algorithms for solving these computationally hard problems \cite{fu2010binary,Shen2009,Jiang2014,Koyuturk2003,DBLP:conf/icde/LuVA08,HessMP17}. In the theory community, also several algorithms for such problems were developed, including efficient polynomial-time approximation schemes (EPTAS) \cite{BanBBKLW19,FominGLP020}.  However, it seems that all these exciting developments in theory and practice occur in different universes. Besides a notable exception \cite{Kumar19f}, the ideas that were useful to advance the algorithmic theory of BMF do not find their place in practice.
 This bring us to the following question, which is the main motivation of our study.  
     \begin{tcolorbox}[colback=green!5!white,colframe=blue!40!black]
Could the ideas behind the theoretical advances on BMF be useful for practical algorithms?
 \end{tcolorbox} 
   
There is no immediate  answer to this question. The algorithms developed in  \cite{BanBBKLW19,FominGLP020} are  rather impractical due to tremendous exponential terms in the running times. See also the discussion in Section~4.3 of   \cite{ijcai2020-685}. However, as we demonstrate, at least of the ideas from \cite{BanBBKLW19,FominGLP020} could be extremely useful and for practical algorithms too.     
  
\paragraph*{Boolean and $\Bbb{F}_p$-Matrix Factorization}  
We   consider two low-rank matrix approximation problems.
Our first problem is  \bmfbrfull (\bmfbr). 
Let $\bfA$ be a binary $m\times n$ matrix. We consider the elements of $\bfA$ to be \emph{Boolean} variables. 
The \emph{Boolean rank} of $\bfA$ is the minimum $r$ such that $\bfA=\bfU \bfV$ for a Boolean $m\times r$ matrix $\bfU$ and a Boolean $r\times n$ matrix $\bfV$, where the product is Boolean. That is,  the logical $\wedge$ plays the role of multiplication and $\vee$ the role of sum. 
Thus the  matrix product is over the Boolean semi-ring $({0, 1}, \wedge, \vee)$. This can be equivalently expressed
as the  normal matrix product with addition defined as $1 + 1 =1$. Binary matrices equipped with such algebra are called \emph{Boolean matrices}. In  \bmfbr, the  objective is 
 
 \begin{eqnarray}\label{eq_PCA_b}
\text{ minimize } \|\bfA-\bfB\|_{0}  \\ 
\text{ subject to } \Brank(\bfB) \leq  r. \nonumber
\end{eqnarray}
Recall that $\|\cdot \|_0$ norm is the number of non-zero entries in the matrix. 

 In the second problem the matrices   are over a finite field GF($p$), where $p$ is a prime. 
The most common example of a finite field  GF($p$) is the set of  the integers $\mod p$,  where $p\geq 2$ is a prime number.
The matrix norm is the entry-wise $\ell_q$-norm $\| \cdot \|_q$.  Recall that for matrix $\bfA$, its   $\ell_q$ matrix norm is defined as 
$||\bfA||_q=  ({\sum_{i=1}^m\sum_{j=1}^n |a_{ij}|^q})^{1/q}$.   
In particular,  $\ell_2$ matrix norm is the Frobenius norm.
Then in the \probgenfull{p}{q} (\probgen{p}{q}) problem, 
we are given an $m\times n$ matrix $\bfA$ over GF($p$) and $r\in{\mathbb N}$, and the objective is to find a  matrix $\bfB$ over GF($p$) optimizing 
\begin{eqnarray}\label{eq_PCA_q}
\text{ minimize } \|\bfA-\bfB\|_{q}  \\ 
\text{ subject to } \text{GF($p$)-}\rank(\bfB) \leq  r. \nonumber
\end{eqnarray}
Here, $\text{GF($p$)-}\rank(\bfB)$ is the rank of the matrix $\bfB$ over field  GF($p$). 
Thus   the entries of the approximation matrix $\bfB$ in  \eqref{eq_PCA_q} should be integers from $\{0,\dots, p-1\}$ and  the arithmetic operations defining the rank of matrix $\bfB$ are over integers modulo $p$. 
The special case of \eqref{eq_PCA_q}  
  when $p=2$ and $q=1$ is  
the  \bmfgfr problem.
Let us remark that when the matrices are binary, the choice of the norm $\|\cdot \|_0$,  $\|\cdot \|_1$, or $\|\cdot \|_q$, for $q>1$, does not make any difference. For GF($p$) with $p>2$, the choice of the norm is essential.    
The difference of   \bmfgfr and  \bmfbr is in the definition of  the rank of $\bfB$.  This is a significant difference because the  $\text{GF(2)-}\rank$ is computable in polynomial time, say by the Gaussian elimination, and computing the $\Brank$ of a matrix is already an NP-hard problem. 
 We design new algorithms for \probgen{p}{q} and \bmfbr and test them on synthetic and real-world data. 


\paragraph*{Related work} Both problems are well-known in Machine Learning and Data Mining communities. Since 
 \bmfbr  was studied in different communities,  in the literature it also appears under different names like \textsc{Discrete Basis Problem}  \cite{MiettinenMGDM08} or 
\textsc{Minimal Noise Role Mining Problem}  \cite{VaidyaAG07,LuVAH12,Mitra:2016}.

The GF(2), and more generally, GF($p$) models find applications for
Independent Component Analysis in signal processing~\cite{GutchGYT12,PainskyRF16,Yeredor11}, latent  semantic analysis \cite{berry1995using}, or pattern discovery for gene expression~\cite{Shen2009}. \probgen{p}{q}  is an essential tool in dimension reduction for high-dimensional data with binary attributes \cite{Koyuturk2003,Jiang2014}.   \bmfbr has found applications in data
mining such as  topic models, association rule mining, and database tiling 
\cite{BelohlavekV10,DanHJWZ15,LuVAH12,MiettinenMGDM08,DBLP:conf/kdd/MiettinenV11,DBLP:conf/icde/Vaidya12}.  The recent survey \cite{ijcai2020-685} provides a  
 concise overview of the current theoretical and practical algorithms proposed for  \bmfbr.

The constraints imposed on the properties of factorization in \eqref{eq_PCA_q} and \eqref{eq_PCA_b}
make the problems computationally intractable.  Gillis et al.~\cite{GillisV15} proved that  \bmfgfr is 
 NP-hard already for $r=1$. 
Since the problems over finite fields are computationally much more challenging, 
 it is not surprising that most of the practical approaches for handling these problems are heuristics ~\cite{fu2010binary,Shen2009,Jiang2014,Koyuturk2003,DBLP:conf/icde/LuVA08}.  
  
  Another interesting trend  in the study of low-rank matrix approximation problems develops in algorithmic theory. A number of algorithms with guaranteed performance were developed for  \probgen{p}{q}, \bmfgfr, and \bmfbr.  Lu et al.  \cite{DBLP:conf/icde/LuVA08} gave a formulation of \bmfbr as an integer programming problem with exponential number of variables 
and constraints. Parameterized algorithms for \bmfgfr  and \bmfbr were obtained  in \cite{FominGP20}.  A  number of approximation algorithms  were developed, resulting in efficient  polynomial time approximation schemes (EPTASes)  obtained in 
\cite{BanBBKLW19,FominGLP020}. 
Parameterized and approximation algorithms from \cite{FominGP20,BanBBKLW19, FominGLP020} are mainly of theoretical importance and are  not implementable due to tremendous running times. 
Bhattacharya et al.~\cite{Bhattacharya19} extended ideas in \cite{BanBBKLW19,FominGLP020} to obtain a 4-pass streaming algorithm which computes a $(1+ \varepsilon)$-approximate \bmfbr.  Kumar et al.~\cite{Kumar19f} designed bicriteria approximation algorithms for \bmfgfr. Except the work of  Kumar et al.~\cite{Kumar19f}, none of the above theoretical algorithms were implemented.

\paragraph*{General overview of the main challenges}
The starting point of our algorithms  for \probgen{p}{q} and \bmfbr are the approximation algorithms developed in \cite{BanBBKLW19,FominGLP020}. The general ideas from these papers  are similar, here we follow  \cite{FominGLP020}.  They develop algorithms for  \bmfbr  and \bmfgfr but generalizations to  \probgen{p}{q} is not difficult. 

The two basic  steps of the approach of \cite{FominGLP020} are the following.  First encode the matrix factorization problem  as a clustering problem with specific constraints on the clusters' centers.   Then use  sampling  similar to 
the sampling used for vanilla $k$-means of \cite{KumarSS10}   for constructing a good approximation.  Implementation of each of these steps is a challenge, if possible at all. In the first step, encoding matrix factorization with rank $r$ results in constrained clustering with $2^{r}$ centers. But what makes the situation even worse is the second step. To obtain a reasonable guaranteed estimate for constrained clustering, one has to take exponentially many samples (exponential in $2^r$ and the error parameter $\varepsilon$), which is the bottleneck in the algorithm's running time.

%
%
%


The first idea that instead of sampling, we implement a simple procedure similar to Lloyd's heuristic for clustering \cite{Lloyd82} adapted for constrained clustering. 
 This is a simple and easily implementable idea. However,  due to the power of encoding the matrix factorization as clustering, 
  in many cases, our algorithm significantly outperforms previously known, sometimes quite involved, heuristics. The problem is that  this strategy works only for very small values of rank $r\leq 5$. This is because the factorization problem is encoded as the problem with $2^r$-clustering and the time required to construct the corresponding instance of clustering is of order  $2^{2r}$. For larger values of $r$ we need to develop a new algorithm that non-trivially uses the algorithm for small rank $r$. 
\subsection{Our methods}
Our algorithm for small values of $r$, follows the steps similar to Lloyd's algorithm or 
 the closely related $k$-means clustering algorithm. We start from some partition of the columns of the matrix. Then the algorithm repeatedly finds the centroid of each set in the partition and then re-partitions the input according to which of these centroids is closest. However, while for $k$-means clustering, the centroid is selected as the vector minimizing the sum of distances to all vectors in the cluster, in our case, the set of centroids should also satisfy a specific property. 

More precisely, in  the \kmclust\ problem we are given a set of points $X\subseteq {\mathbb R}^m$ and $k\in {\mathbb N}$, and the objective is to find $k$ center points $\bfc_1,\ldots,\bfc_k\in {\mathbb R}^m$ such that $\sum_{x\in X} \min_{i} ||x-\bfc_i||_2^2$ is minimized. For a set of $k$ centroids $\bfc_1,\ldots,\bfc_k$, one can define $k$ clusters $X_1,\ldots,X_k$ such that 
their union is $X$ and ($*$) for any $x\in X_i$, $\bfc_i$ is one of the closest point to $x$. For a given set of clusters $X_1,\ldots,X_k$, the best centers $\bfc_1,\ldots,\bfc_k$ satisfying ($*$) can be obtained by computing the centroid of $X_i$ for all $i\in \{1,\ldots,k\}$. 
The $k$-means algorithm  starts with a random set of $k$ clusters $X_{1,1},\ldots, X_{1,k}$ of $X$ and then finds their centroids. Then using these centroids we find $k$ clusters $X_{2,1},\ldots, X_{2,k}$ satisfying ($*$). Then, again we compute a set of centroids for $X_{2,1},\ldots, X_{2,k}$ and so on. It is easy to verify that the ``cost of a solution'' in each iteration is at least as good as the previous iteration. This algorithm converges very fast and outputs very good solution in practice. 

In order to apply ideas similar to the $k$-means algorithm for \probgen{p}{q} and \bmfbr,  we use the  
``constrained'' version of clustering introduced  by Fomin et al.~\cite{FominGLP020}. 

A $k$-ary relation $R$ over $\{0,1\}$ is a set of binary $k$-tuples with elements from $\{0,1\}$. A $k$-tuple $t=(t_1,\dots, t_k)$ \emph{satisfies} $R$, if $t\in R$. 
\begin{definition}[Vectors satisfying $\cR$~\cite{FominGLP020}]
Let $\cR=\{R_1, \dots, R_m\}$ be a set of $k$-ary relations. We say that a set $C=\{\bfc_1, \bfc_2, \dots, \bfc_k\}$ of binary $m$-dimensional vectors  \emph{satisfies $\cR$},  
if 
 $(\bfc_1[i],\ldots,\bfc_k[i])\in R_i$ for all $i\in \{1,\ldots,m\}$.
\end{definition}

For example, for $m=2$, $k=3$, $R_1=\{(0,0,1), (1,0,0)\}$, and  $R_2=\{(1,1,1), (1,0,1), (0,0,1)\}$, the set of vectors 
\[
\bfc_1=\left(
\begin{array}{c}
0\\
1\\
\end{array}
\right) , \, 
\bfc_2=\left(
\begin{array}{c}
0\\
1\\
\end{array}
\right) , \,
\bfc_3=\left(
\begin{array}{c}
1\\
1\\
\end{array}
\right)   \]
satisfies $\cR=\{R_1,  R_2\}$ because  $(\bfc_1[1],\bfc_2[1],\bfc_3[1])=(0,0,1)\in   R_1$ and $({\bfc}_1[2],\bfc_2[2],\bfc_3[2])=(1,1,1)\in   R_2$. 

The \emph{Hamming distance} between two vectors $\bfx, \bfy\in\{0,1\}^m$, where $\bfx=(x_1,\ldots,x_m)^\intercal$ and $\bfy=(y_1,\ldots,y_m)^\intercal$, is $\hdist(\bfx,\bfy)=\sum_{i=1}^m |x_i-y_i|$. 
For a set of vectors $C$ and a vector $\bfx$, we define 
$\hdist(\bfx,C)=\min_{\bfc\in C}\hdist(\bfx,\bfc)$. 
Then, the problem \rclustering\ is defined as follows.

\defproblem{\rclustering (\sc BCC)}{A set $X\subseteq \{0,1\}^m$ of $n$ vectors, a positive integer $k$, and a set of $k$-ary relations
$\cR=\{R_1, \dots, R_m\}$. }{Among all  vector sets $C=\{\bfc_1,\ldots,\bfc_k\}\subseteq \{0,1\}^m$ satisfying $\cR$, find a set $C$ minimizing the sum 
$\sum_{\bfx\in X} \hdist(\bfx,C)$.}

 
The following proposition is from \cite{FominGLP020} and for completeness we give a (different) proof sketch here. 

\begin{proposition}[\cite{FominGLP020}]\label{prop:matrixFasRClust} For any instance $(\bfA,r)$ of  \probgen{2}{1} (\bmfbr) 
 one can  construct in time $\OO(m+n+2^{2r})$ an instance $(X,k=2^r,\cR)$ of 
 {\sc BCC} 
with the below property, where $X$ is the set of column vectors of $\bfA$: 
\begin{itemize} 
 \item for any $\alpha$-approximate solution $C$ of $(X,k, \cR)$ there is an algorithm that in  time $\OO(rmn)$ returns   an $\alpha$-approximate solution $\bfB$ of $(\bfA,r)$, and 
 \item for any $\alpha$-approximate solution $\bfB$ of $(\bfA,r)$,  there is an algorithm that in  time $\OO(rmn)$ returns   an  $\alpha$-approximate solution $C$ of $(X,k, \cR)$.
 \end{itemize}
%
%
\end{proposition}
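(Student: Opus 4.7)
The plan is to exploit the structural fact that a matrix $\bfB$ of (Boolean or GF(2)) rank at most $r$, written as $\bfB = \bfU\bfV$ with $\bfU$ of size $m\times r$, has at most $2^r$ distinct columns: each column of $\bfB$ is $\bfU\bfv$ for some $\bfv\in\{0,1\}^r$. Consequently, approximating $\bfA$ by such a $\bfB$ in the $\|\cdot\|_0$-norm is the same as assigning each column of $\bfA$ to one of $k:=2^r$ ``centers'' and paying the total Hamming distance. The catch is that the $2^r$ centers are not arbitrary binary $m$-vectors: they share the common factor $\bfU$, a constraint that can be described locally, row by row, by a single $2^r$-ary relation on $\{0,1\}$.

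Concretely, I would fix an enumeration $\bfv_1,\ldots,\bfv_{2^r}$ of $\{0,1\}^r$ and define
\[
R \;=\; \bigl\{\,(\bfu\cdot\bfv_1,\ldots,\bfu\cdot\bfv_{2^r}) : \bfu\in\{0,1\}^r\,\bigr\}\;\subseteq\;\{0,1\}^{2^r},
\]
where $\cdot$ denotes Boolean (resp.\ GF(2)) inner product. The {\sc BCC} instance then takes $X$ equal to the set of columns of $\bfA$, $k=2^r$, and $R_1=\cdots=R_m=R$. Because all $R_i$ coincide, precomputing $R$ once suffices; this takes $\OO(2^{2r})$ time up to $\mathrm{poly}(r)$ factors, giving total construction time $\OO(m+n+2^{2r})$. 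Given a center set $C=\{\bfc_1,\ldots,\bfc_{2^r}\}$ satisfying $\cR$, the tuple $(\bfc_1[i],\ldots,\bfc_{2^r}[i])\in R$ for each row $i$ uniquely identifies a row $\bfu_i\in\{0,1\}^r$ of $\bfU$, since one reads off $\bfu_i[\ell]$ as the coordinate of $C$ at the position $j$ where $\bfv_j$ is the $\ell$-th standard basis vector. Pairing $\bfU$ with the matrix $\bfV$ whose $\ell$-th column is the $\bfv_j$ that achieves the nearest center to the column $\bfa_\ell$ of $\bfA$ produces $\bfB=\bfU\bfV$ with $\|\bfA-\bfB\|_0=\sum_{\bfa}\hdist(\bfa,C)$.

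Conversely, given any rank-$r$ factorization $\bfB=\bfU\bfV$, setting $\bfc_j:=\bfU\bfv_j$ for each $j$ produces $2^r$ centers that satisfy $\cR$ by construction, and since every column of $\bfB$ equals some $\bfc_j$, we obtain $\sum_{\bfa}\hdist(\bfa,C)\leq \|\bfA-\bfB\|_0$. Both maps preserve or improve the cost, so the two optima agree and $\alpha$-approximate solutions transfer in either direction. The step I expect to be the main obstacle is matching the claimed $\OO(rmn)$ running time on each conversion: naively enumerating the $2^r$ centers to find nearest neighbours would cost $\OO(2^r mn)$. The way around is to recover $\bfU$ from $C$ in $\OO(rm)$ time and then obtain the column assignment $\ell\mapsto \bfv_{j(\ell)}$ directly from the clustering produced alongside $C$ by the {\sc BCC} algorithm rather than recomputing nearest centers; the reverse direction proceeds analogously, instantiating each $\bfc_j$ only when a column of $\bfA$ is actually mapped to it.
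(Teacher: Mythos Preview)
Your proposal takes essentially the same approach as the paper's proof sketch: both build a single relation $R$ by enumerating all $\bfv\in\{0,1\}^r$ (the paper indexes by subsets $S_i\subseteq\{1,\ldots,r\}$, you by vectors, but $\bfu\cdot\bfv_j$ over GF(2) is exactly $(\sum_{\ell\in S_i}x_\ell)\bmod 2$), set $R_1=\cdots=R_m=R$, and translate solutions by reading the $r$ basis vectors from the singleton/standard-basis positions of $C$. Your worry about the $\OO(rmn)$ conversion time is legitimate and is glossed over in the paper's sketch, which defers full details to the cited reference.
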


\begin{proof}[Proof sketch]
First we prove the proposition for \probgen{2}{1}. 
Let $(\bfA,r)$ be the input instance of \probgen{2}{1}. Recall that $X$ is a the set of column vectors of $\bfA$ and $k=2^r$. 
Now, we explain how to construct the relations $\cR=\{R_1,\ldots,R_m\}$. Here, we will have $R_i=R_j$ for all $i,j\in \{1,\ldots, m\}$ and we denote this relation by $R$. The relation $R$ depends only on $r$. 
Let $S_0,S_1,\ldots,S_{k-1}$ be the distinct subsets of $\{1,\ldots,r\}$ listed in the non-decreasing order of its size. Each $x=(x_1,\ldots,x_r)\in \{0,1\}^r$  we correspond  a tuple $(y_{0},\ldots,y_{k-1})$ in $R$  as follows. For each $i\in \{0,\ldots,k-1\}$, we set $y_{i}=(\sum_{j\in S_i} x_j)\mod 2$. That is, $R$ contains $2^r$ tuples, one for each $x\in \{0,1\}^r$. This completes the construction of the output instance $(X,k,\cR)$ of {\sc BCC}.  See also an example of a construction after the proof.

Now, given a solution $C=\{\bfc_0,\ldots,\bfc_{k-1}\}$ to the instance $(X,k,\cR)$ of {\sc BCC}, we can construct a solution $\bfB$ to the instance $(\bfA,r)$ of \probgen{2}{1} as follows. For each $i\in \{1,\ldots, n\}$, let $\bfb_i=\argmin_{\bfc \in C} d_{H}(\bfa_i,\bfc)$, where $\bfa_i$ is the $i$th column vector of $\bfA$. Now, for all $i\in \{1,\ldots, n\}$, we set the $i$-th column of $\bfB$ to be $\bfb_i$. From the construction of the relations $\cR$, any vector in $C$ is a linear combination of $\{\bfc_1,\ldots,\bfc_r\}$. This implies that the rank of $\bfB$ is at most $r$.  

Now suppose $\bfB$ is a solution to the instance $(\bfA,r)$ of {\sc BCC}. Let $Q=\{\bfc_1,\ldots,\bfc_r\}$ be a (multi)set of $r$ column vectors in $\bfB$ such that each column vector in $\bfB$ is a linear combination of vectors in $Q$. Such a set $Q$ exists because the rank of $\bfB$ is at most $r$. Recall that $S_0,S_1,\ldots,S_{k-1}$ are the distinct subsets of $\{1,\ldots,r\}$ listed in the non-decreasing order of the subset sizes. For each $i\in \{0,\ldots,k-1\}$, let $\bfu_{i}=(\sum_{j\in S_i} \bfc_j)\mod 2$. Then, $\{\bfu_{0},\bfu_{1},\ldots,\bfu_{{k-1}}\}$ is a solution to $(X,k,\cR)$. 

For the proof when $(\bfA,r)$ is an instance of {\sc BMF}, in the above construction we replace the addition mod 2 operations with the logical $\vee$ operations. 
%
\end{proof}

%
%
%
%

Let us give an  example of constructing constraints for $r=3$. Here,  $S_0=\emptyset$, $S_1=\{1\}$, $S_2=\{2\}$, $S_3=\{3\}$, 
$S_4=\{1,2\}$, $S_5=\{1,3\}$, $S_6=\{2,3\}$, and $S_7=\{1,2,3\}$.
For each binary $3$-tuple $x=(x_1,x_2, x_3)$, we correspond a binary $8$-tuple from $R$.
 The $i$-th element of this tuple is  $ (\sum_{j\in S_i} x_j)\mod 2$. 
 For example, for
$x=(1,1,0)$, we have a tuple $(0,1,1,0, 0,1,1,0)$ in $R$. 
Thus, we construct the set of constraints 
$R=\{
(0,0,0,0,       0,0,0,0),
(0,1,0,0,       1,1,0,1),
(0,0,1,0,       1,0,1,1),\\
(0,0,0,1,       0,1,1,1),
(0,1,1,0,       0,1,1,0),
(0,1,0,1,       1,0,1,0),\\
(0,0,1,1,       1,1,0,0),
(0,1,1,1,       0,0,0,1)
\}$. 
%

Now for  any set of centers $\{\bfc_0,\ldots,\bfc_7\}$ satisfying the above relations,  $\bfc_0=\vec{0}$ and any $\bfc_i$ is a linear combination of $\{\bfc_1,\bfc_2,\bfc_3\}$.
For example, suppose  $\bfc_0,\ldots,\bfc_7$ are the columns of the following matrix.

\begin{equation*}
\left( \begin{array}{cccccccc} 
0 & 1 & 0 & 0 &       1 & 1 & 0 & 1 \\
0 & 0 & 0 & 1 &       0 & 1 & 1 & 1 \\
0 & 1 & 1 & 0 &       0 & 1 & 1 & 0 \\
0 & 0 & 1 & 1 &       1 & 1 & 0 & 0 \\
0 & 1 & 1 & 1 &       0 & 0 & 0 &1
\end{array} 
\right).
\end{equation*}
Let us note that each of the rows of the matrix is one of the $8$-tuples of $R$. 
Then $\bfc_0=\vec{0}$, $\bfc_4=(\bfc_1+\bfc_2)\mod 2$, $\bfc_5=(\bfc_1+\bfc_3)\mod 2$,  $\bfc_6=(\bfc_2+\bfc_3)\mod 2$, and $\bfc_7=(\bfc_1+ \bfc_2  +\bfc_3)\mod 2$. 
Thus the  GF($p$)-rank of this  matrix is at most $3$.  

\medskip



We remark that our algorithms and the algorithms of Fomin et al.~\cite{FominGLP020} are different. Both the algorithm uses 
Proposition~\ref{prop:matrixFasRClust} as the first step. Afterwards, Fomin et al.~\cite{FominGLP020} uses sampling methods and this step takes time double-exponential in $r$. But, we use a method similar to the Lloyd's algorithm in the case of small ranks. For the case of large ranks we use several executions of Lloyd's algorithm on top of our algorithm for small ranks. We    overview  our algorithms   below.


\paragraph*{Algorithms for small rank} 
Because of Proposition~\ref{prop:matrixFasRClust}, we know that {\sc BCC} is a general problem that subsumes 
{\sc BMF} and \probgen{2}{1}. 
Let $I=(X,k,{\mathcal R}=\{R_1,\ldots,R_m\})$ be an instance of 
{\sc BCC} 
  and $C=\{\bfc_1,\ldots,\bfc_k\}$ be a solution to $I$. In other words,  $C$ satisfies ${\mathcal R}$.  We  call $C$ to be the  {set of centers}. 
We  define the \emph{cost of  the solution} 
$C$ of $I$ to be $\Hdist(X,C)=\sum_{\bfx\in X} \hdist(\bfx,C)$. Given set $C$, there is a natural way we can partition the set of vectors $X$ into $k$ 
sets $X_1\uplus\cdots\uplus X_k$, where for each vector $\bfx$ in $X_i$, the closest to $\bfx$ vector from $C$ is  $\bfc_i$. That is, 

\begin{equation}\label{clsuersfromcenter}
\Hdist(X,C)=\sum_{i=1}^{k}\sum_{\bfx\in X_i}\hdist(\bfx,\bfc_i)
\end{equation}
%
We call  such partition \emph{clustering of $X$ induced by $C$} and refer to sets $X_1,\ldots,X_k$ as to  {\em clusters corresponding to   $C$}. That is, given a solution $C$, we can easily find the clusters such that the best possible set of centers for these clusters is~$C$. 

Next, we explain how we compute the best possible centers from a given set of clusters of $X$. For a  partition $X_1\uplus\cdots\uplus X_k$ of $X$, $i\in [m]$, and $(b_1,\ldots,b_k)\in R_i$, define 
\begin{equation}
\label{centerfromclusters}
f_i(b_1,\ldots,b_k)=\sum_{j=1}^k\sum_{\bfx\in X_j}|\bfx[i]-b_j|
\end{equation}
Now, the set $\{\bfc_1,\ldots,\bfc_k\}$ be such that for any $i\in \{1,2,\ldots,m\}$, $(\bfc_1[i],\ldots,\bfc_k[i])=\argmin_{b\in R_i} f_i(b)$. One can easily verify that the best possible set of centers for the clusters $X_1,\ldots,X_k$ is 
$\{\bfc_1,\ldots,\bfc_k\}$.  That is, for any set of centers $\{\bfc_1',\ldots,\bfc_k'\}$ satisfying $\cR$, 

\begin{equation}
\label{centerfromclustersproperty}
\sum_{j=1}^{k}\sum_{\bfx\in X_j}\hdist(\bfx,\bfc_j) \leq \sum_{j=1}^{k}\sum_{\bfx\in X_j}\hdist(\bfx,\bfc_j')
\end{equation}

Our algorithm for 
{\sc BCC} 
 works as follows. Initially we take a random partition $X_{0,1}\uplus\cdots\uplus X_{0,k}$ of $X$. Then, using \eqref{centerfromclusters}, we find a solution $C_1=\{\bfc_{1,1},\bfc_{1,2},\ldots,\bfc_{1,k}\}$. Then, we find clusters $X_{1,1},\ldots,X_{1,k}$ corresponding to $C_1$ (i.e., $C_1$ and $\{X_{1,1},\ldots,X_{1,k}\}$ satisfies \eqref{clsuersfromcenter}). This implies that 
 
 \begin{eqnarray}
\sum_{j=1}^{k}\sum_{\bfx\in X_{1,j}}\hdist(\bfx,\bfc_{1,j})&=&\Hdist(X,C_1) 
\label{eqn:aabb}
 \end{eqnarray}
 
Now, again using \eqref{centerfromclusters} and the partition $\{X_{1,1},\ldots,X_{1,k}\}$, we find a solution $C_2=\{\bfc_{2,1},\bfc_{2,2},\ldots,\bfc_{2,k}\}$. Thus, by the property mentioned in \eqref{centerfromclustersproperty}, we have that 

\begin{equation}
\label{eqn:aacc}
\sum_{j=1}^{k}\sum_{\bfx\in X_{1,j}}\hdist(\bfx,\bfc_{2,j}) \leq \sum_{j=1}^{k}\sum_{\bfx\in X_{1,j}}\hdist(\bfx,\bfc_{1,j}) 
\end{equation}

Because of \eqref{eqn:aabb}, \eqref{eqn:aacc}, and the fact that $\Hdist(X,C_2)\leq \sum_{i=1}^{k}\sum_{\bfx\in X_{1,i}}\hdist(\bfx,\bfc_{2,i})$, we have that $\Hdist(X,C_2) \leq \Hdist(X,C_1)$. 
If $\Hdist(X,C_2) < \Hdist(X,C_1)$, we continue the above steps using the partition $X_{1,1}\uplus \ldots \uplus X_{1,k}$ and so on. 
Our algorithm continues this process until the cost of the solution converges. 


Our algorithm works well when $r$ is small (i.e., our algorithm on the output instances of Proposition~\ref{prop:matrixFasRClust}). 
Notice that $2^{2r}$ is a lower bound on the running time of the above algorithm when we use it for \probgen{2}{1} and \bmfbr (See Proposition~\ref{prop:matrixFasRClust}). For example, when $r=20$ the algorithm takes at least $2^{40}$ steps. So for large values of $r$, this algorithm is slow. 


\paragraph*{Algorithms for large rank}

For large $r$, we design new  algorithms for \probgen{p}{q} and \bmfbr which use  our base algorithm (the one explained above) for smaller values of rank. 
Here, we explain an overview of our algorithm for \bmfbr for large $r$. 
Let us use the term {\sf LRBMF} for the base algorithm for 
\bmfbr. 

Consider the case when $r=20$. Let $\bfA$ be the input matrix for \bmfbr. The idea is to split the matrix $\bfA$ into small parts and obtain approximate matrices of small rank (say $5$ or less) for all parts  using 
{\sf LRBMF} and merge these parts to get a matrix of rank at most $20$. 
Let $X$ be the set of columns of the input matrix $\bfA$. Suppose we partition the columns of $\bfA$ into four parts of almost equal size. Let $X_1,\ldots,X_4$ be these parts and let $\bfA_i$ be the matrix formed using columns of $X_i$ for all $i\in \{1,\ldots,4\}$. Let $\bfB_i$ be the output of  {\sf LRBMF} on the input $(\bfA_i,5)$ for all $i\in \{1,\ldots,4\}$. Then, by merging $\bfB_1,\ldots,\bfB_4$ we get a matrix of rank at most $20$. But this method did not give us good results because identical columns may be moved to different parts in $X_1,\ldots,X_4$. Thus, it is important that we  do this partition carefully. One obvious method is to use Lloyd's algorithm to get a partition of $X$ into four parts. But, unfortunately, even this method does not give us good results.

For our algorithm we use an iterative process to get a partition of $X$ where we use Lloyd's algorithm in each step. 
In the initial step we run Lloyd's algorithm on $(X,20)$ and let ${C}=\{c_1,\ldots,c_{20}\}$ be the set of output centers. Now we do an iterative process to partition $C$ with each block containing at most $5$ vectors. Towards that we run  Lloyd's algorithm on $(C,4)$. Let ${\cal Z}$ be the set of output clusters. If a cluster has size at most $5$, then that cluster is a block in the final partition. If there is a 
cluster $C'\in {\cal Z}$  of size more than $5$, then we run Lloyd's  algorithm on $(C',\lceil |C'|/5 \rceil)$ and refine the clustering of $C$. That is, the new clustering is obtained by replacing $C'$ with the clusters obtained in this run of Lloyd's  algorithm. We continue this process until all the clusters have size at most $5$. 
Thus we obtain  a partition $\{C_1,\ldots,C_{\ell}\}$ of $C$ of clusters of size at most $5$.  Now we partition $X$  into $X_1,\ldots,X_{\ell}$ as follows. 
For each $i\in \{1,\ldots,\ell\}$, we let $X_i$ be the set of vectors in $X$ such that for each vector $\bfx\in X_i$, the closest vector $c$ from $C$ to $\bfx$ is from $C_i$ (here, we break ties arbitrarily).  Let $\bfA_i$ be the matrix whose columns are the vectors of $X_i$. 
For each $i\in \{1,\ldots,\ell\}$, we run {\sf LRBMF} on $(\bfA_i,|C_i|)$; let $\bfB_i$ be the output. 
Since $\sum_{i=1}^{\ell}|C_i|=20$, the rank of the matrix resulted  by merging all $\bfB_i$s is at most $20$. 
The final output of our algorithm is obtained by merging the matrices $\bfB_1,\ldots,\bfB_{\ell}$. This completes the high level description of our algorithm for the case when $r=20$. The complete technical details of our algorithm is explained in the next section and experimental results of our algorithms are explained in the last section.

\section{Algorithms}

\label{algo}

We define a more general problem called \clustering{p}{q}, and prove that, in fact, \probgen{p}{q} is a particular case of \clustering{p}{q}. Before describing \clustering{p}{q}, let us introduce some notations.   
Recall that, for a number $q\geq 0$, a prime number $p>1$, and two vectors $\bfx,\bfy \in \{0,1,\ldots,p-1\}^m$, the distance between $\bfx$ and $\bfy$ in $\ell_q$ is $||\bfx-\bfy||_q=(\sum_{i=1}^m (\bfx[i]-\bfy[i])^q)^{1/q}$. Here, for notational convenience we use $0^0=0$. The differences $\bfx[i]-\bfy[i]$ of the vector coordinates are computed modulo $p$. The summation $\sum_{i=1}^m$ and multiplications are over the field of real numbers. 
For a number $q\geq 0$, a set of vectors $C$, and a vector $\bfx$, define $d_q(\bfx,C)=\min_{\bfc\in C} || \bfx-\bfc||^q_q$.  When $C=\{\bfc\}$,  we write $d_q(\bfx,\bfc)$ instead of $d_q(\bfx,C)$.

A $k$-ary relation $R$ over $\{0,\ldots,p-1\}$ is a set of $k$-tuples with elements from $\{0,\ldots,p-1\}$. A $k$-tuple $t=(t_1,\dots, t_k)$ \emph{satisfies} $R$ if $t$ is equal to one of the $k$-tuples from $R$.  
\begin{definition}[Vectors satisfying $\cR$]
Let $p>1$ be a prime number and let $\cR=\{R_1, \dots, R_m\}$ be a set of $k$-ary relations over $\{0,1,\ldots,p-1\}$. We say that a set $C=\{\bfc_1, \bfc_2, \dots, \bfc_k\}$ of $m$-dimensional vectors over GF$(p)$ \emph{satisfies $\cR$},  
if 
 $(\bfc_1[i],\ldots,\bfc_k[i])\in R_i$ for all $i\in \{1,\ldots,m\}$.
\end{definition}

Next, we formally define \clustering{p}{q}, where  $q\geq 0$ and $p>1$ is a  prime, and then prove that 
indeed  \probgen{p}{q} is a special case of \clustering{p}{q}. 

\defproblem{\clustering{p}{q}}{A set $X\subseteq \{0,1,\ldots,p-1\}^m$ of $n$ vectors, a positive integer $k$, and a set of $k$-ary relations
$\cR=\{R_1, \dots, R_m\}$. }{Among all  vector sets $C=\{\bfc_1,\ldots,\bfc_k\}\subseteq \{0,1,\ldots,p-1\}^m$ satisfying $\cR$, find a set $C$ minimizing the sum 
$\sum_{\bfx\in X} d_q(\bfx,C)$.}

The proof of the following lemma is almost identical to the proof of 
Proposition~\ref{prop:matrixFasRClust}, and hence omitted here. 
%

\begin{lemma}\label{lem:pqmatrixFasRClust} For any instance $(\bfA,r)$ of  \probgen{p}{q} 
 one can  construct in time $\OO(m+n+p^{2r})$ an instance $(X,k=p^r,\cR)$ of \clustering{p}{q} 
with the following property:
\begin{itemize} 
 \item for any solution $C$ of $(X,k, \cR)$,  there is an algorithm that in  time $\OO(p^{2r}m)$ returns   a solution $\bfB$ of $(\bfA,r)$ with the same cost as $C$, and 
 \item for any  solution $\bfB$ of $(\bfA,r)$,  there is an algorithm that in  time $\OO(p^{2r}m)$ returns   a  solution $C$ of $(X,k, \cR)$ with the same cost as $C$.
 \end{itemize}
\end{lemma}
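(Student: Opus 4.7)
The plan is to mimic the proof of Proposition~\ref{prop:matrixFasRClust} with two changes: (a) replace the indexing of tuples by subsets of $\{1,\ldots,r\}$ (which over $\GF$ parameterizes all Boolean linear combinations of $r$ basis vectors) with indexing by coefficient vectors in $\{0,\ldots,p-1\}^r$ (which over $\text{GF}(p)$ parameterizes all $\text{GF}(p)$-linear combinations of $r$ basis vectors); and (b) replace the Hamming cost with $d_q$. The structural pivot is the identity $\text{GF}(p)\text{-}\rank(\bfB)\le r$ iff every column of $\bfB$ lies in the $\text{GF}(p)$-span of some $r$ fixed vectors; the role of the relation $R$ is to encode exactly this ``lies in a common $r$-dimensional span'' condition row-by-row.

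Concretely, set $k=p^r$, let $X$ be the set of columns of $\bfA$, and let $\bfa_0,\bfa_1,\ldots,\bfa_{k-1}$ be an enumeration of all vectors in $\{0,\ldots,p-1\}^r$ (the coefficient vectors of linear combinations), listed with $\bfa_0=\vec 0$ first. Define a single $k$-ary relation $R\subseteq\{0,\ldots,p-1\}^k$ as follows: for every $\bfx=(x_1,\ldots,x_r)\in\{0,\ldots,p-1\}^r$ include the $k$-tuple $(y_0,\ldots,y_{k-1})$ with $y_i=\bigl(\sum_{j=1}^r \bfa_i[j]\cdot x_j\bigr)\bmod p$. Put $R_i=R$ for all $i\in\{1,\ldots,m\}$. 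Building this explicit relation costs $\OO(p^{2r})$ time; adding the columns of $\bfA$ yields the claimed $\OO(m+n+p^{2r})$ construction bound.

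For the $C\Rightarrow\bfB$ direction: given $C=\{\bfc_0,\ldots,\bfc_{k-1}\}$ satisfying $\cR$, for each column $\bfa_i$ of $\bfA$ pick $\bfb_i=\argmin_{\bfc\in C}\|\bfa_i-\bfc\|_q^q$ and let $\bfB$ have these columns. By construction of $R$, applied row-by-row, $\bfc_i = \sum_{j=1}^r \bfa_i[j]\cdot \bfc^{*}_j \bmod p$ for some fixed vectors $\bfc^{*}_1,\ldots,\bfc^{*}_r$ (namely the coordinate-wise witnesses $\bfx$ of each row of $R$, reassembled across rows), so every $\bfc_i$, and hence every column of $\bfB$, lies in the $\text{GF}(p)$-span of $r$ vectors; thus $\text{GF}(p)\text{-}\rank(\bfB)\le r$. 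The cost equals $\sum_{\bfx\in X}d_q(\bfx,C)$ coordinate-wise and column-wise, which is precisely $\|\bfA-\bfB\|_q^q$; since minimizing $\|\bfA-\bfB\|_q$ is equivalent to minimizing its $q$-th power, the costs agree.

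For the $\bfB\Rightarrow C$ direction: pick $r$ columns $\bfc^{*}_1,\ldots,\bfc^{*}_r$ of $\bfB$ spanning its column space over $\text{GF}(p)$ (existence follows from $\text{GF}(p)\text{-}\rank(\bfB)\le r$), and for each coefficient vector $\bfa_i$ set $\bfc_i=\bigl(\sum_{j=1}^r \bfa_i[j]\cdot \bfc^{*}_j\bigr)\bmod p$. Then $C=\{\bfc_0,\ldots,\bfc_{k-1}\}$ satisfies $\cR$ because each row of the matrix whose columns are $\bfc_0,\ldots,\bfc_{k-1}$ is, by the same linear-combination identity, one of the tuples in $R$. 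Every column of $\bfB$ appears among the $\bfc_i$ (it is \emph{some} linear combination of the basis, hence equals $\bfc_i$ for the matching $\bfa_i$), so assigning each $\bfa_i\in X$ to its corresponding $\bfc\in C$ preserves cost. Converting between the representations touches $k=p^r$ centers of length $m$, giving the stated $\OO(p^{2r}m)$ bound. The only nontrivial step is verifying that the row-wise ``linear-combination'' encoding really does characterize ``GF$(p)$-rank $\le r$''; once this structural equivalence is in hand, both translations and their cost-preservation are essentially bookkeeping, so I expect no substantive obstacle beyond cleanly writing that equivalence. The analogous argument for $\bmfbr$ (not part of this lemma but parallel to Proposition~\ref{prop:matrixFasRClust}) would replace modular addition by logical $\vee$ and coefficient vectors by subsets.
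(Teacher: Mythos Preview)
Your proposal is correct and matches the paper's approach exactly: the paper omits the proof of this lemma, stating only that it is ``almost identical to the proof of Proposition~\ref{prop:matrixFasRClust},'' and your two modifications (indexing tuples by coefficient vectors in $\{0,\ldots,p-1\}^r$ rather than subsets, and replacing Hamming distance by $d_q$) are precisely the natural generalizations needed. The one small point worth tightening is the cost-preservation in the $\bfB\Rightarrow C$ direction---since the clustering cost takes a $\min$ over all centers, the resulting $C$ may have cost \emph{at most} that of $\bfB$ rather than exactly equal, but this is harmless for the lemma's purpose and the paper's own statement is equally loose here.
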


Thus, to solve the low-rank matrix factorization problem over a finite field GF($p$), it is enough to design an algorithm for 
\clustering{p}{q}.   
Let $I=(X,k,{\mathcal R}=\{R_1,\ldots,R_m\})$ be an instance of \clustering{p}{q}   and let $C=\{\bfc_1,\ldots,\bfc_k\}$ be a solution to $I$. 
We call $C$ to be the set of {\em centers}. Then, define the cost of  the solution $C$ of the instance $I$ to be $\Hdist(X,C)=\sum_{\bfx\in X} d_q(\bfx,C)$. 
Also, given the set $C$, there is a natural way one can partition the set of vectors $X$ into $k$ parts $X_1\uplus\cdots\uplus X_k$ as follows. 
For each vector $\bfx$, let $i$ be the smallest index such that  $\bfc_i$ is a closest vector to $\bfx$ from $C$.  
 Then, $\bfx \in X_i$. 
 This implies that   

\begin{equation}
\label{clufrctr}
\Hdist(X,C)=\sum_{i=1}^{k}\sum_{\bfx\in X_i} d_q(\bfx,\bfc_i)
\end{equation}
%
We call  such partition \emph{clustering of $X$ induced by $C$} and  the sets $X_1,\ldots,X_k$ as the  {\em clusters corresponding to~$C$}. 

Next, we explain how we compute the best possible centers from a given set of clusters of $X$. For a  partition $X_1\uplus\cdots\uplus X_k$ of $X$, $i\in [m]$, and $(b_1,\ldots,b_k)\in R_i$, define 
\begin{equation}
\label{pq_centerfromclusters}
g_i(b_1,\ldots,b_k)=\sum_{j\in [k]}\sum_{\bfx\in X_j}|\bfx[i]-b_j|^q
\end{equation}

Let the set $\{\bfc_1,\ldots,\bfc_k\}$ be such that for any $i\in [m]$, $(\bfc_1[i],\ldots,\bfc_k[i])=\argmin_{b\in R_i} g_i(b)$. One can easily verify that $\{\bfc_1,\ldots,\bfc_k\}$ is a best possible set of centers for the clusters $X_1,\ldots,X_k$.

Our algorithm \algoclus{p}{q} for \clustering{p}{q} has the following steps. 

\begin{itemize}
\setlength{\itemindent}{2em}
\item[Step 0:] Set $minCost : =\infty$ and $k=p^r$. 
\item[Step 1:] Let $X_{1}\uplus\cdots\uplus X_{k}$ be a random partition of $X$. 
\item[Step 2:]
\label{stepcont} 
 Using \eqref{pq_centerfromclusters},  compute a solution $C$ from the partition $X_{1}\uplus\cdots\uplus X_{k}$.  
\item[Step 3:] Find clusters $Y_1,\ldots,Y_{k}$ corresponding to $C$ (i.e., $C$ and $\{Y_{1},\ldots,Y_{k}\}$ satisfies \eqref{clufrctr}).
\item[Step 4:] If $\Hdist(X,C)= minCost$, then  output $C$ and   stop. Otherwise, set $minCost=\Hdist(X,C)$, and $X_i:=Y_i$ for all $i\in [k]$. Then, go to Step 2. 
\end{itemize}

Notice that when $q=1$, the maximum error can be $pmn$. Thus the number of iterations in \algoclus{p}{1} is at most $pmn$ and each iteration takes time $\OO(p^r(m+n))$. Thus, the worst case running time of \algoclus{p}{1} is $\OO(p^{r+1}(m+n)mn)$.


%
%
%

\paragraph*{Algorithm for \probgen{p}{q}}
Recall that \probgen{p}{q} is a special case \clustering{p}{q} (see Lemma~\ref{lem:pqmatrixFasRClust}). 
For a given instance $(\bfA,r)$ of \probgen{p}{q}, we apply Lemma~\ref{lem:pqmatrixFasRClust} and construct 
an instance $(X,k=p^r,\RR)$ of \clustering{p}{q}. Then, we run \algoclus{p}{q} on $(X,k=2^r,\RR)$ 10 times and take the best output among these 10 executions.  In the next section we explain about the experimental evaluations of the algorithm for \probgen{p}{q}. We call our algorithm for \probgen{p}{1} as {\sf LRMF($p$)}.

\paragraph*{Algorithm for \bmfbr}
We have mentioned that \clustering{p}{q} is general problem subsuming \rclustering\ and \bmfbr\ is a special case of \rclustering. Next, we explain, how to obtain an equivalent instance of \clustering{2}{1} 
from a given  instance $(\bfA,r)$ of \bmfbr. Towards that apply Proposition~\ref{prop:matrixFasRClust}, and get 
an instance $(X,k=2^r,\RR)$ of \rclustering from the instance  $(\bfA,r)$ of \bmfbr. In fact, this instance 
$(X,k=2^r,\RR)$ is the required instance of \clustering{2}{1}. 
Next, we run \algoclus{2}{1} on $(X,k=2^r,\RR)$ 10 times and take the best output among these 10 executions.  
We call this algorithm as {\sf LRBMF}.

%

\begin{algorithm}[tb]
   \caption{{\sf PLRBMF}}
   \label{algpar}
\begin{algorithmic}[1]
\STATE Let $X$ be the set of columns of $\bfA$
\STATE $Cost:=0$
\STATE $d':= r_s \cdot d$\;
\STATE Run Lloyd's $k$-means clustering algorithm on $(X,r)$. Let $X_1,\ldots, X_{d'}$ be the output clusters and let $\bfz_1,\ldots,\bfz_{d'}\in {\mathbb R}^m$ be the output cluster centers \label{firststep}
\STATE Run Lloyd's $k$-means clustering algorithm on $(\{\bfz_i \colon i\in [d']\},d)$. Let ${\cal Z}$ be the set of output clusters
\WHILE{there exists $Z\in {\cal Z}$ such that $\vert Z\vert > r_s$}
\STATE Run Lloyd's $k$-means clustering algorithm on $(Z,\lceil \frac{\vert Z\vert }{r_s} \rceil)$. Let ${\cal Z}'$ be the set of output clusters
\STATE ${\cal Z}:=({\cal Z}\setminus \{Z\})\cup {\cal Z}'$\;
\ENDWHILE
\STATE For each $Z\in {\cal Z}$, let $X_Z$ be the union of the clusters from   $\{X_1,\ldots, X_{d'}\}$ such that the corresponding cluster centers (see Line~\ref{firststep}) belongs to $Z$. Notice that $\{X_Z\colon Z\in {\cal Z}\}$ is a partition of $X$
\STATE For each $Z$, run {\sf LRBMF} on $([X_Z],\vert Z\vert)$ and let $M_Z$ be the output. Here, $[X_Z]$ is the matrix where the set of columns is $X_Z$
\STATE Let $D$ be the union of the set of columns of the matrices in $\{M_Z~\colon~ Z\in {\cal Z}\}$
\STATE The output matrix $M$ is constructed as follows. For each $i\in [n]$, the $i$th column of $M$ is the vector in $D$ which is closest to the $i$th column of $\bfA$. 
\end{algorithmic}
\end{algorithm}

\paragraph*{Algorithms for large rank}
Notice that the running time of \algoclus{p}{q} is at least $p^r$. Thus, to get a fast algorithm for large $r$ we propose the following algorithm (call it \algolargeclus{p}{q}). Thus the running times of {\sf LRMF($p$)} and {\sf LRBMF} are at least $2^{r}$.  For large $r$, instead of running {\sf LRBMF} (or {\sf LRMF($p$)}) we partition the columns of the input matrix into blocks and we run  {\sf LRBMF} (or {\sf LRMF($p$)}) on each of these blocks with for rank at most $r_s$ such that the sum of the rank parameters among the blocks is at most $r$. Then, we merge the outputs of each of these small blocks. We call these new algorithms {\sf PLRBMF} and {\sf PLRMF($p$)}.

The input of {\sf PLRBMF} is an instance of \bmfbr\ and two integers $r_s$ and $d$ such that $r_s\cdot d \leq r$, where $r$ is the rank of the output matrix. Similarly, the input of {\sf PLRMF($p$)} is an instance of  \probgen{p}{1} and two integers $r_s$ and $d$ such that $r_s\cdot d \leq r$, where $r$ is the rank of the output matrix. That is, here we specify $r_s$ and $d$ as part of input and we want our algorithms to use 
{\sf LRMF($p$)} or {\sf LRBMF} with rank parameter at most $r_s$ and finally construct an output of rank at most $r_s\cdot d$. That is, given $r$, one should choose $d$ to be the largest integer such that $r_s\cdot d \leq r$, where $r_s$ is the largest rank that is practically feasible for running {\sf LRMF($p$)} and {\sf LRBMF} 
for the input matrices we consider. 

Here, we explain the algorithm  {\sf PLRBMF}. The steps of the algorithm  {\sf PLRMF($p$)} are identical to {\sf PLRBMF} and hence we omitted those details. The pseudocode of the algorithm {\sf PLRBMF} is given in Algorithm~\ref{algpar}. The input for {\sf PLRBMF}  is $(\bfA,r,r_s,d)$, where $r_s\cdot d \leq r$. We would like to remark that when $d=1$, {\sf PLRBMF} is same as {\sf LRBMF}. 

Next we analyze the running time.  
The algorithm  {\sf PLRBMF} calls Lloyd's $k$-means algorithm at most $1+r$ times. As the maximum error is at most $mn$, the total number of iterations of  Lloyd's algorithm in all executions together is $(1+r)mn$. Moreover each iteration takes $\OO(rmn)$ time. At the end we run at most $r$ iterations of {\sf LRBMF} with rank being $r_s$. Thus the total running time is $\OO(r^2m^2n^2+2^{r_s}(m+n)mn)$.

\section{Experimental Results}
\label{expres}


\begin{table}[htp]
    \caption{Comparison on synthetic data.  
     The entries in the table are average error and standard deviations on 10 random $50\times 100$ matrices. 
     Here the ranks of the output matrices are $\{1,\ldots,5\}\cup \{10,15,20,25,30\}$. Standard deviations are mentioned in brackets.}
    \label{table1}
    \centering
    \resizebox{\columnwidth}{!}{
      \begin{tabular}{|c|c|c|c|c|c|c|}
    \specialrule{.2em}{.1em}{.1em}
    Rank     & 1     & 2     & 3     & 4     & 5              \\
    \specialrule{.2em}{.1em}{.1em}
    {\sf PLRMF($2$)} &2143.6	&1922.5	&1772.1	&1657.8	&1552.6 \\

         &(13.9)	&(12.5)	& (18.6)	& (12.2)	& (12.7) \\
    \hline

    {\sf PLRBMF} &2143.9	&1946.8	&1823.1	&1723.6	&1646.1 \\

       & (15.1)	& (8.2)	& (13.9)	& (14.5)	& (9.9) \\
    \hline

    {\sf BMFZ} &2376.5 &2204.6 &2106.7 &2023.5 &1941.2 \\

       & (14.7)	& (18.3)	& (19)	& (13.7)	& (14.7)\\
    \hline

    {\sf NMF} &2424.8 &2303.1 &2205.4 &2114.0 &2041.0 \\

        &(4)	& (6)	& (5)	& (11.4)	& (9.4) \\
    \hline

    {\sf ASSO} &2481.5 &2447.5 &2414.9 &2383.2 &2352.3 \\
   
      &(43.1)	&(42.4)	&(42.3)	&(41.9)	&(41.2) \\
 
    \specialrule{.2em}{.1em}{.1em}
	\end{tabular}
	}

%
%
    \centering
    \resizebox{\columnwidth}{!}{
    \begin{tabular}{|c|c|c|c|c|c|c|}
    \specialrule{.2em}{.1em}{.1em}
    Rank     & 10     & 15     & 20     & 25     & 30              \\
    \specialrule{.2em}{.1em}{.1em}
    {\sf PLRMF($2$)} &1374.1	&1190.2 &992	&818.6	&642.7\\
    
        &(18.4)	&(14.5)	&(10.7)	&(13.9)	&(19.2) \\
    \hline

    {\sf PLRBMF} &1412.5	&1221.8	&1067 &898.2 &776.4 \\
    
      &(16.2)	&(13)	&(20.2)	&(14.9)	&(36.4) \\
    \hline

    {\sf BMFZ} &1647.2	&1403.1 	& 1184.5	&972.6	&768.6 \\
    
     &(19.7)	&(19.5)	&(18.2)	&(13.8)	&(12.7) \\
    \hline

    {\sf NMF} &1780.7	&1600.4	&1460.7	&1337.5	&1214.7 \\
    
     &(11.8)	&(9.5)	&(14.2)	&(10.5)	&(21.6) \\
    \hline

    {\sf ASSO} &2201.8	&2055.7	&1913.1	&1773.9	&1637.7 \\
    
        &(38)	&(36.3)	&(34.2)	&(32.3)	&(31.3) \\

    \specialrule{.2em}{.1em}{.1em}
	\end{tabular}
	}
\end{table}

We analyze our algorithm for \probgen{p}{1} (called {\sf PLRMF($p$)}), and  
\bmfbr (called {\sf PLRBMF}) on synthetic data and real-world data.  We use the $r_s$ value to be $5$ 
for {\sf PLRBMF} and {\sf PLRMF($2$)}. 
That is,  {\sf PLRBMF} is same as {\sf LRBMF}  and {\sf PLRMF($2$)} is same as {\sf LRMF($2$)}
and when $r\leq 5$. 
We run all the codes in a laptop with specification Intel Core i5-7200U CPU, $2.50$GHz $\times 4$, and 8GB RAM. We compare our algorithms with   the following algorithms.

\begin{itemize}

\item \texttt{Asso} is an algorithm for \bmfbr 
by Miettinen et al.~\cite{MiettinenMGDM08}.

\item One of the closely related problem is Non-negative Matrix Factorization (NMF), where we are given a matrix 
$\bfA\in {\mathbb R}^{m\times n}$ and an integer $r$, and the objective is to find two factor matrices $\bfU \in {\mathbb R}^{m\times r}$ and  $\bfV \in {\mathbb R}^{r \times n}$ with non-negative entries such that the squared  Frobenius norm of  $\bfA-\bfU \bfV$ is minimized. We compare our algorithms 
with 
the algorithms  for  NMF (denoted by  {\sf NMF}) designed in~\cite{lee99}. 
 We used the implementation from \url{https://github.com/cthurau/pymf/blob/master/pymf/nmf.py}.
  The details about error comparisons are different for synthetic and real-world data and it is explained in the corresponding subsections.  

\item 
Recall that Kumar et al.~\cite{Kumar19f} considered the following problem. Given a binary matrix $\bfA$ of order $m\times n$ and an integer $r$, compute two binary matrices $\bfU \in \{0,1\}^{m\times r}$ and $\bfV\in \{0,1\}^{r\times n}$ such that $||\bfU\cdot \bfV -\bfA||_F^2$ is minimized where $\cdot$ is the matrix multiplication over ${\mathbb R}$.   
Their algorithm is a two step process. In the first step they run the $k$-Means algorithm with 
the input being the set of rows of the input matrix  and the number of clusters being $2^r$ over reals. 
Then each row is replaced with a row from the same cluster which is closest to the center. Then in the second step a factorization for the the output matrix of step 1 (which has at most $2^r$ distinct rows) is obtained. For the experimental evaluation  Kumar et al. implemented the first step of the algorithm with number of centers being $r$ instead of $2^r$. We call this algorithm as {\sf BMFZ}. That is, here we get a binary matrix $\bfB$ with at most $r$ distinct rows as the  output. The error of our algorithm will be compared with $||\bfA-\bfB||_1$.
\end{itemize}

\begin{figure}[t]
\centering
\begin{subfigure}{}
\centering
\includegraphics[width = 7cm]{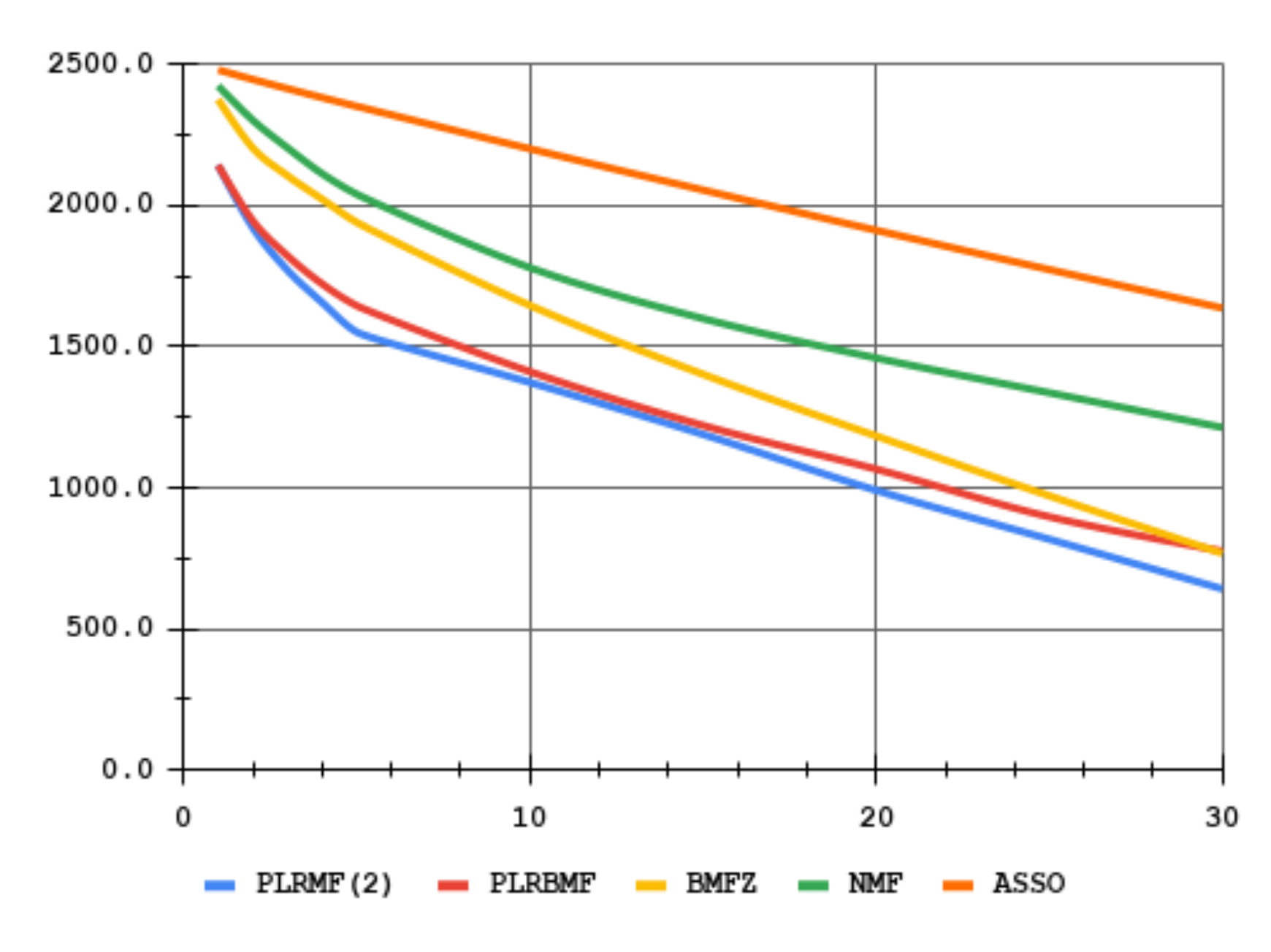}
\end{subfigure}
\begin{subfigure}{}
\includegraphics[width = 7cm]{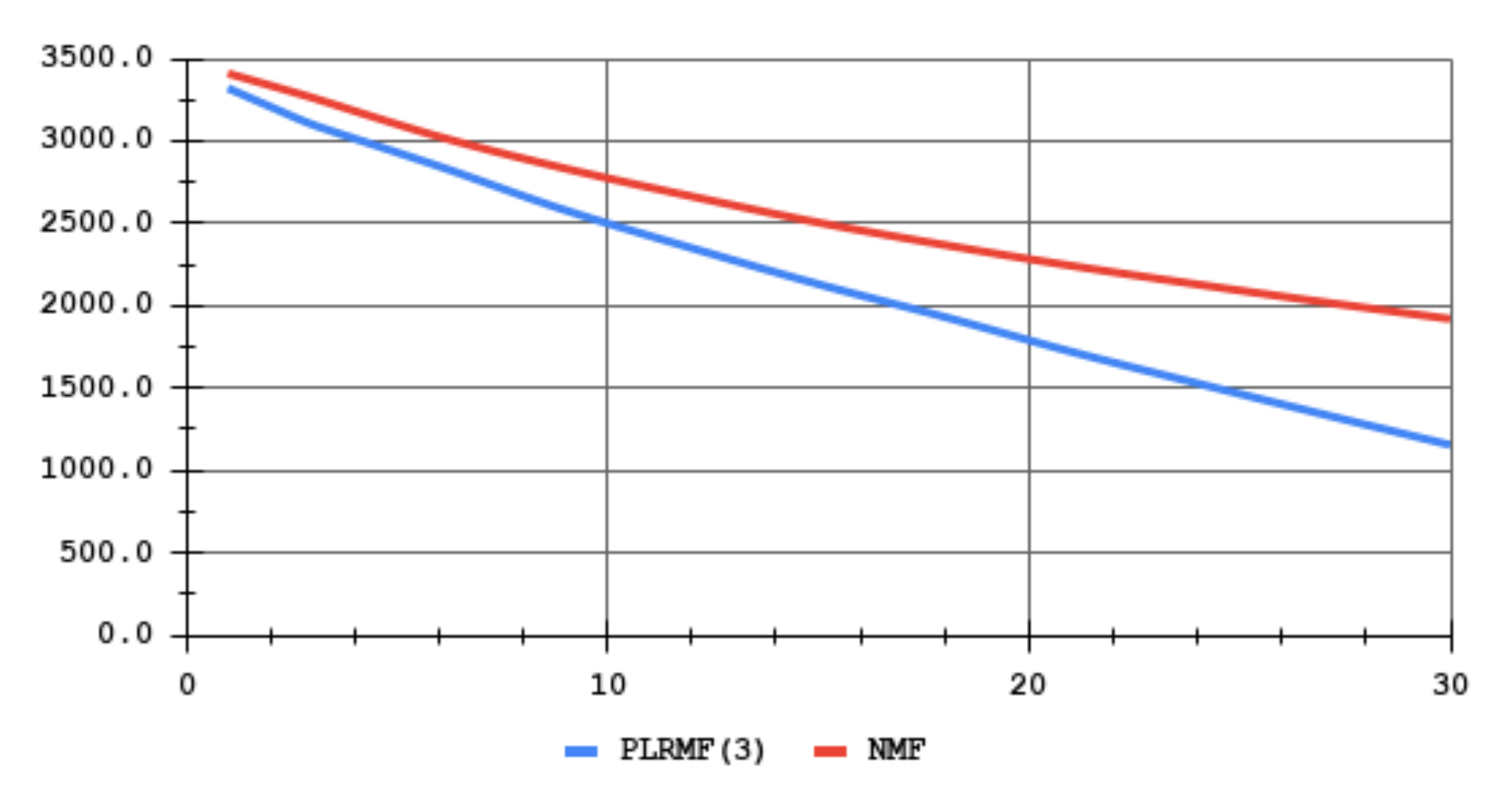}
\end{subfigure}%
\begin{subfigure}{}
\includegraphics[width = 7cm]{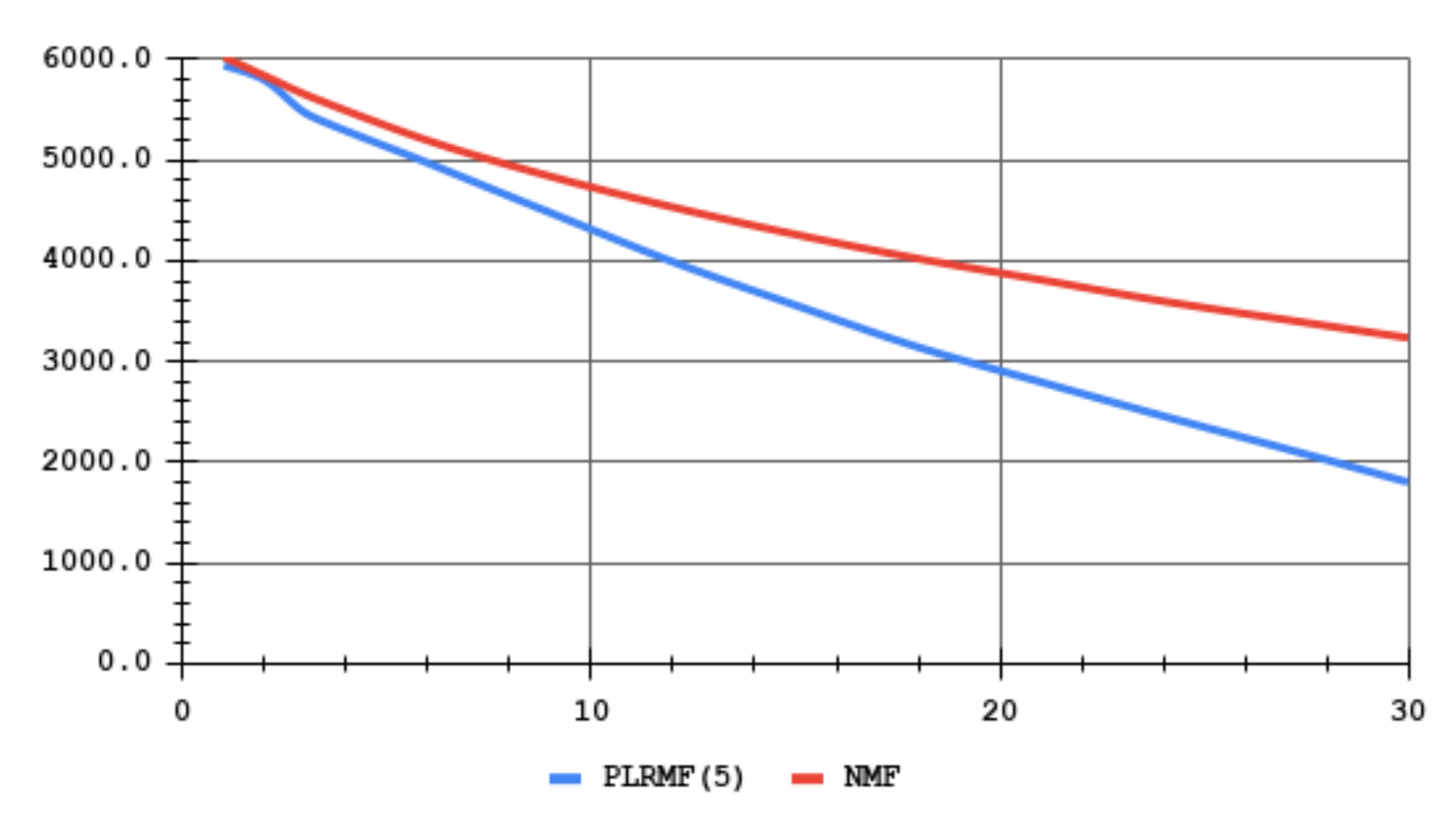}
\end{subfigure}
\caption{Graph on synthetic data where the entries in the table are average error on 10 random $50\times 100$ matrices.}
\label{fig:synthgraphs}
\end{figure}

\subsection{Synthetic Data}

%
%
%
We analyze our algorithms on binary matrices and compare with {\sf NMF}, {\sf BMFZ}, and \texttt{Asso} on random matrices of dimension $50\times 100$.  We run all the algorithms 10 times and take the best results. 
The output of the {\sf NMF} will be two factor matrices over reals.  
We compare the error of our algorithm with 
$||\bfA-\bfU \bfV||_1$,  where $\bfU$ and $\bfV$ are the factors output by {\sf NMF}  on the input~$\bfA$. 
The results are summarized in Tables~\ref{table1}. 
Even without rounding the factors of the output of {\sf NMF}, our algorithms perform better.  
We would like to mention that {\sf NMF} is designed to get factors with the objective of minimizing 
$||\bfA-\bfU \bfV||^2_F$. For our problem the error is measured in terms of $\ell_1$-norm and so we are getting better results than {\sf NMF}. 
We also compare {\sf PLRMF($3$)} and {\sf PLRMF($5$)} with {\sf NMF}. The performance of our algorithms are summarized in Figure~\ref{fig:synthgraphs}. 
{\sf PLRMF(2)} is giving $>15\%$ improvement over {\sf BMFZ} for  rank  $3$ to  $30$. 
{\sf PLRMF(5)} percentage improvement over {\sf NMF} is monotonously increasing: we have 
more than $3\%$ improvement on rank $3$, more than $11\%$ improvement on rank $12$, and more than $26\%$ improvement on rank~$21$.

\subsection{Experimental Results on Real-world Data}
\label{expres}

    \begin{table}[t]
    
            \caption{Performance of our algorithm {\sf PLRBMF} compared to {\sf NMF} and {\sf BMFZ}. The dimension of the image is $561\times 800$.}
        \label{tbl:zebra}

        \centering
        
         \resizebox{\columnwidth}{!}{
        \begin{tabular}{|c|c|c|c|}
           \toprule
           &  {\sf NMF} & {\sf BMFZ} &{\sf PLRBMF}  \\
          \midrule
    \rotatebox{90}{Original image}&          \includegraphics[scale=0.08]{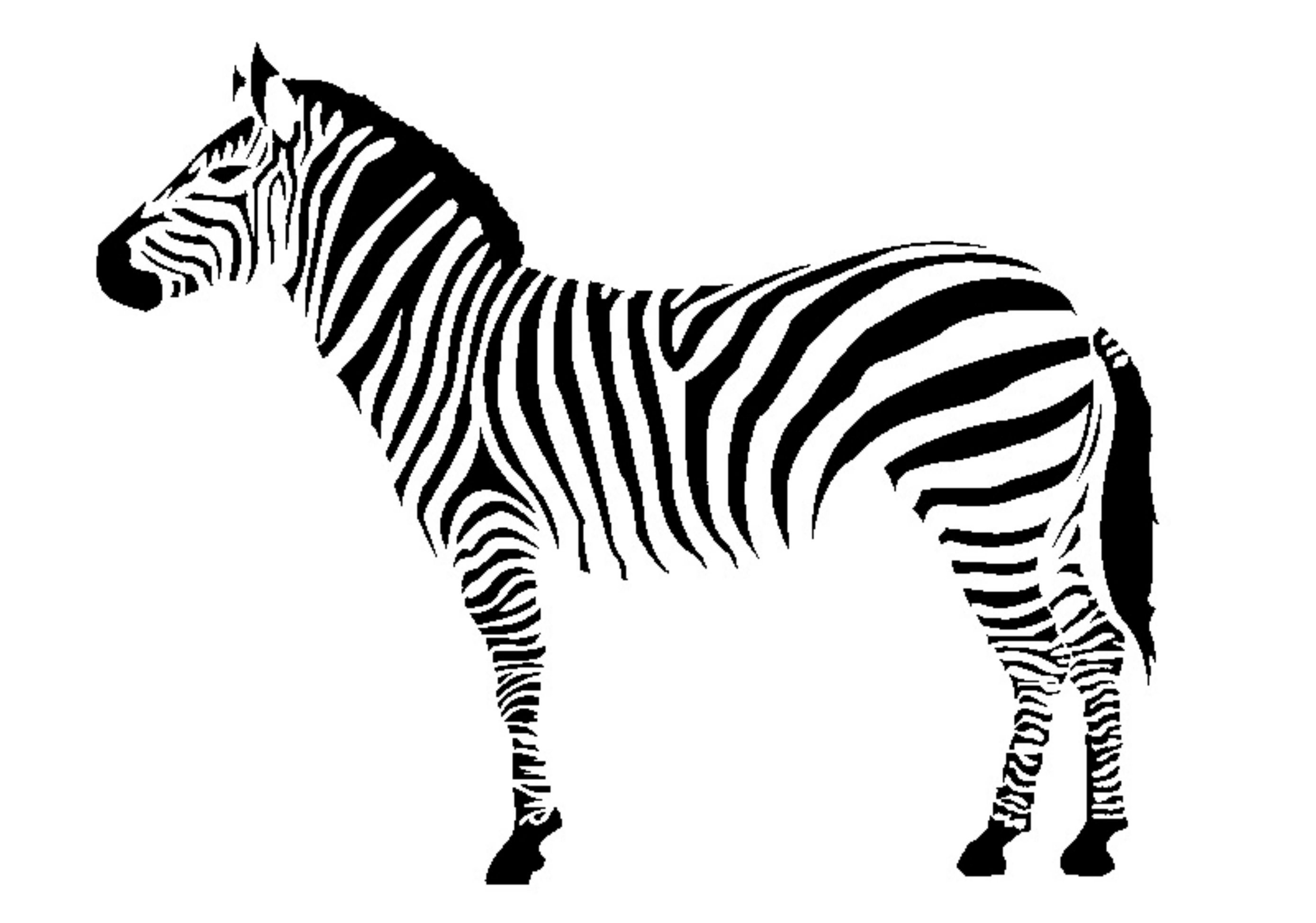} & \includegraphics[scale=0.08]{original_image} & \includegraphics[scale=0.08]{original_image} \\
          \midrule

   \rotatebox{90}{Rank: 10 }&        \includegraphics[scale=0.08]{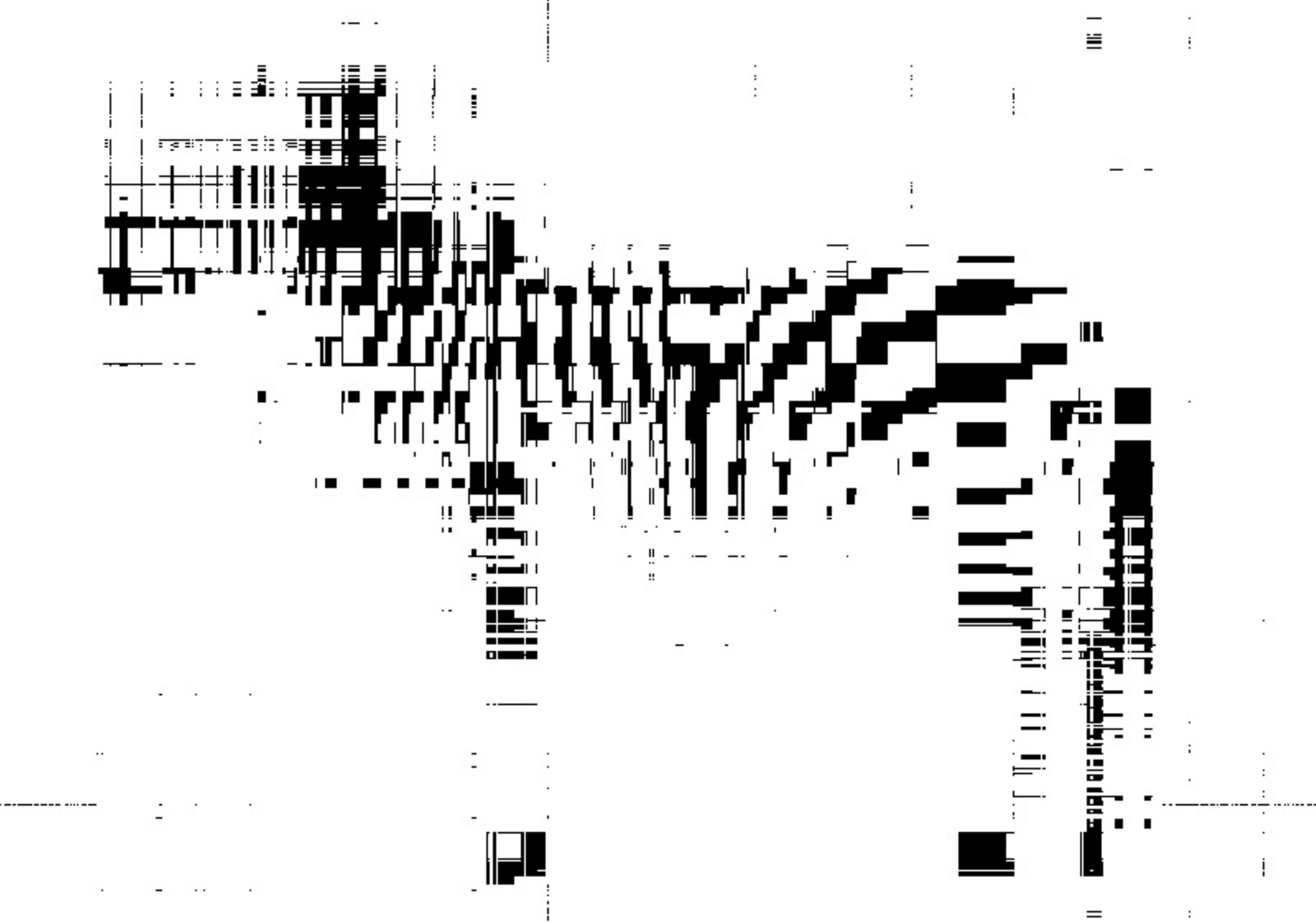} & \includegraphics[scale=0.08]{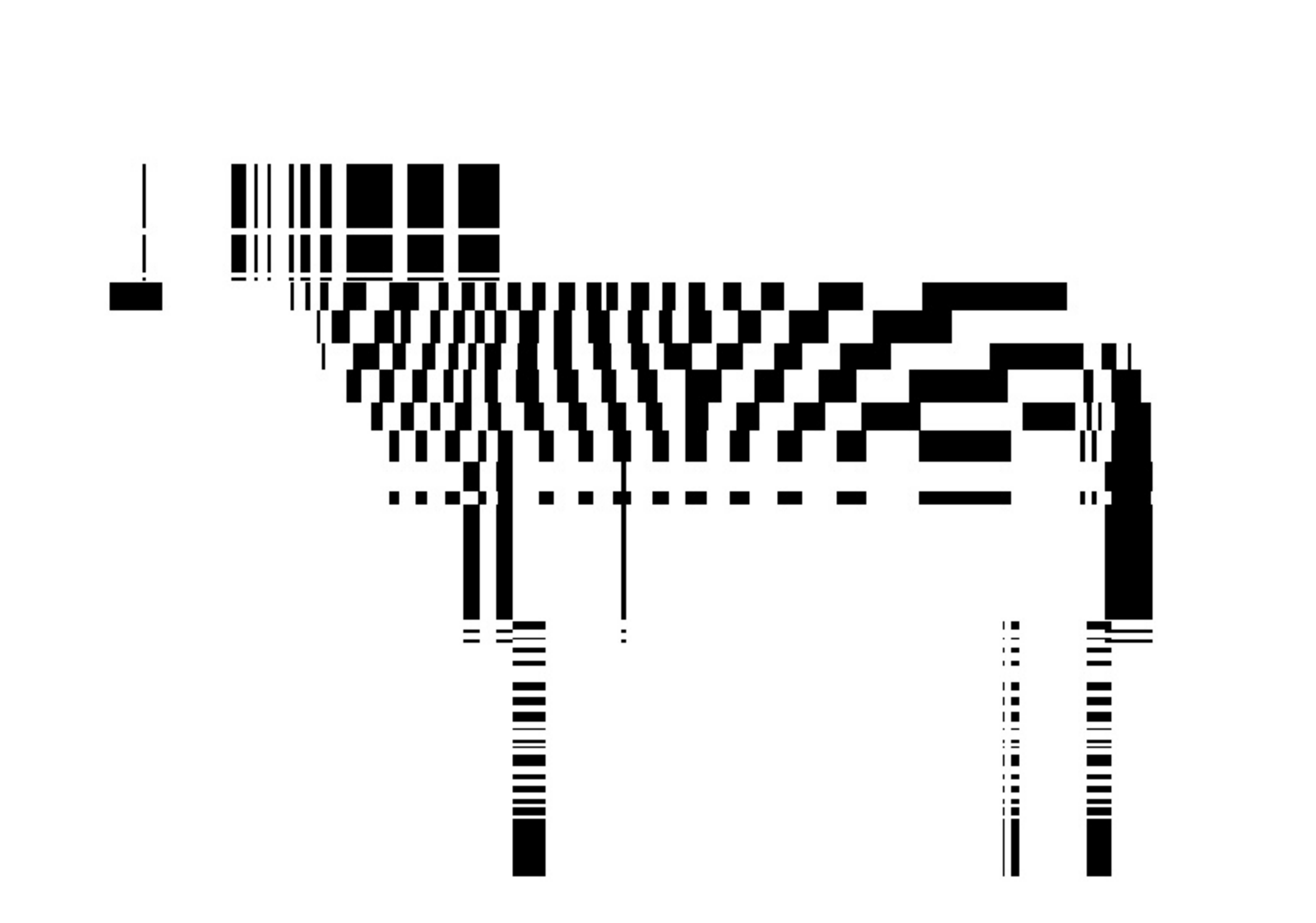} & \includegraphics[scale=0.08]{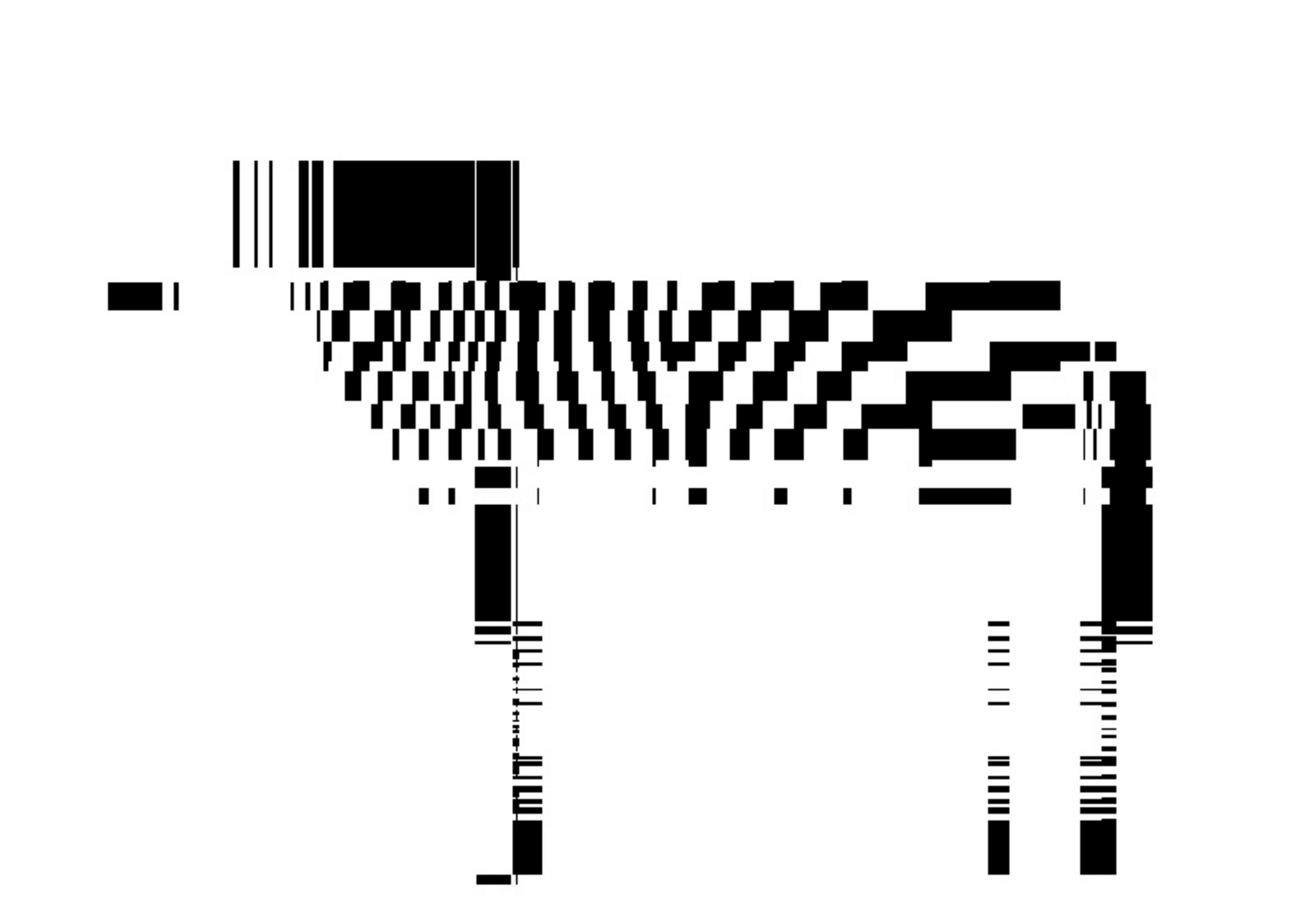} \\
& Error: 40213 & Error: 37485 & Error: 35034\\


          \midrule

\rotatebox{90}{Rank: 20 }&           \includegraphics[scale=0.08]{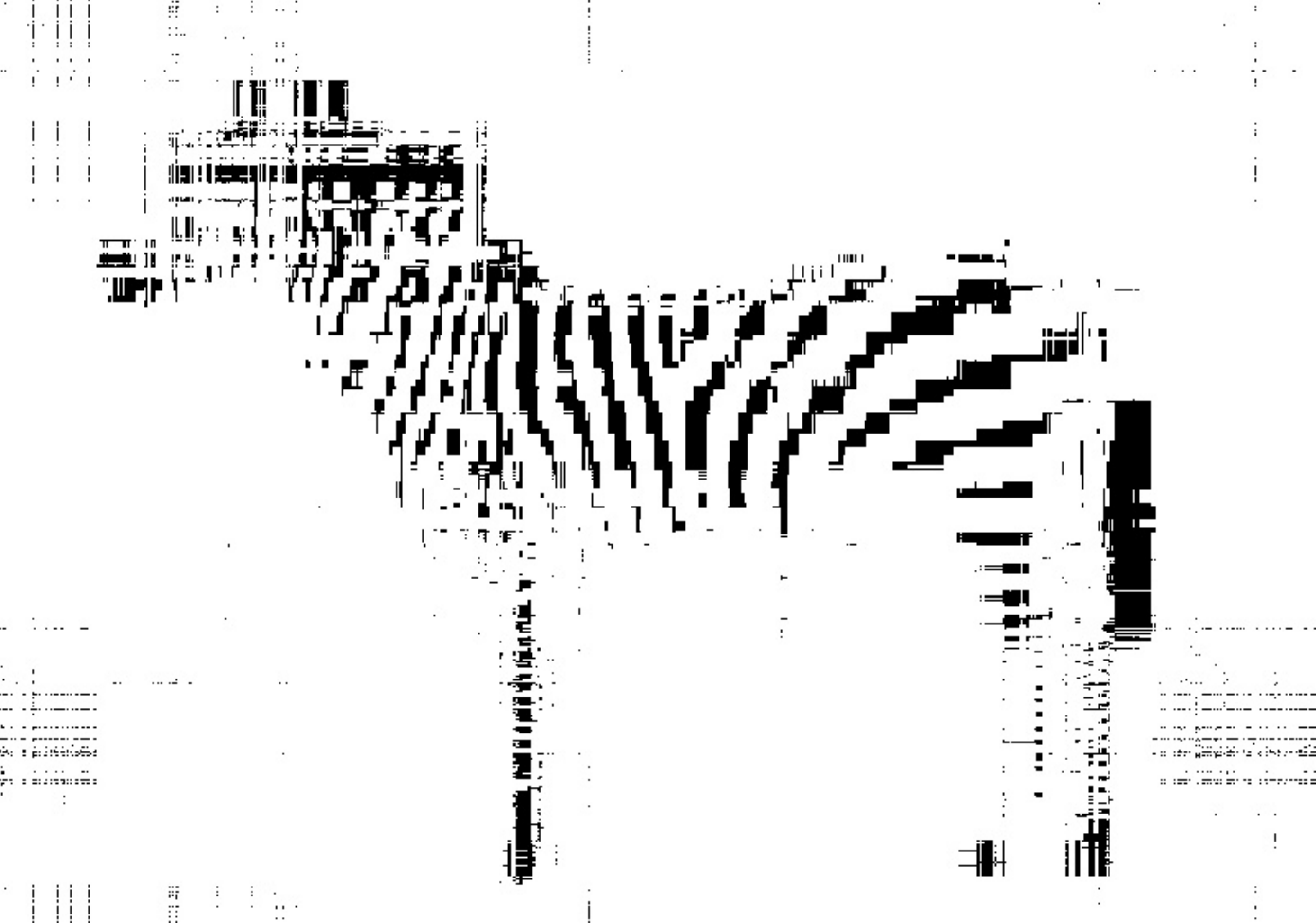} & \includegraphics[scale=0.08]{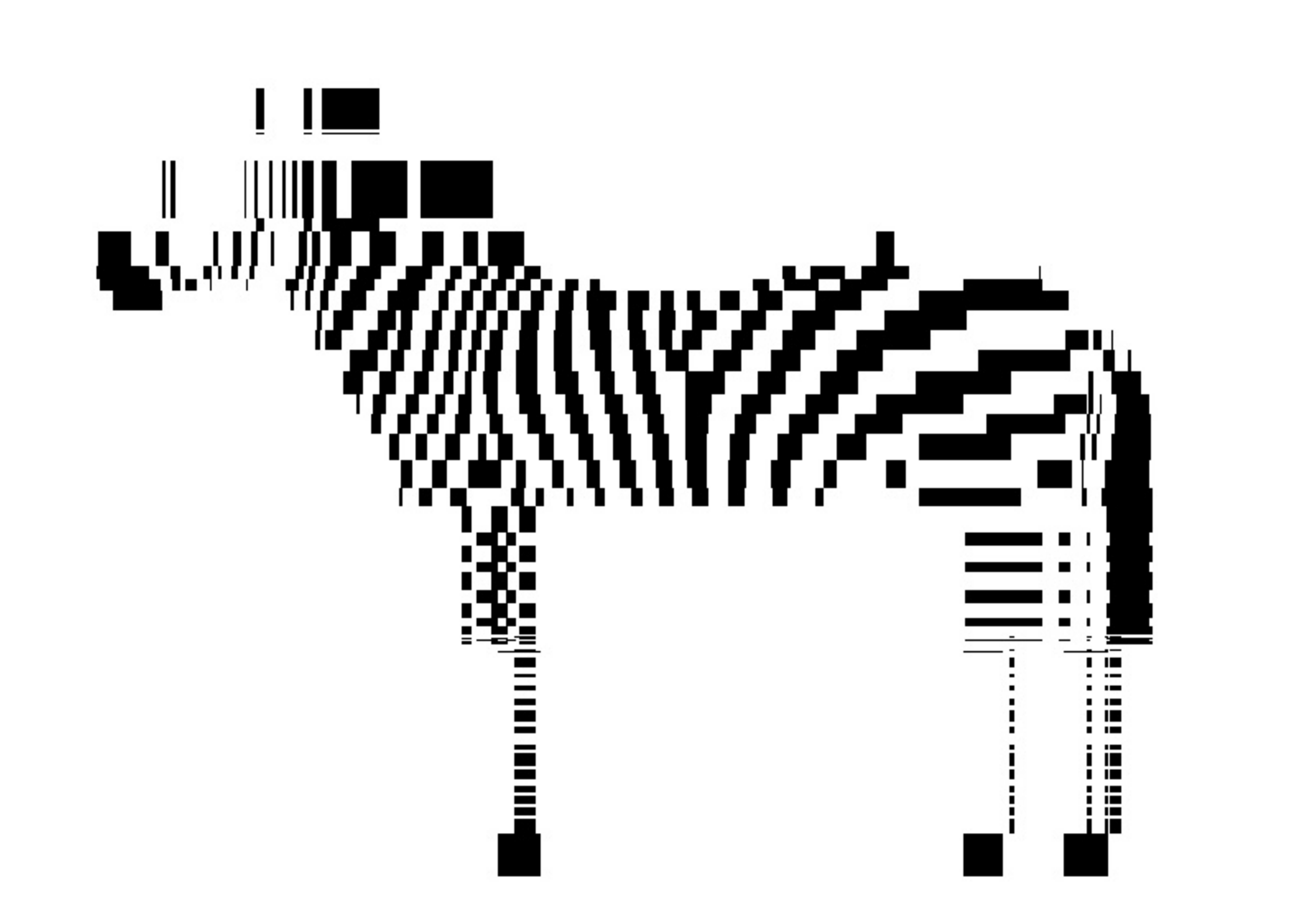} & \includegraphics[scale=0.08]{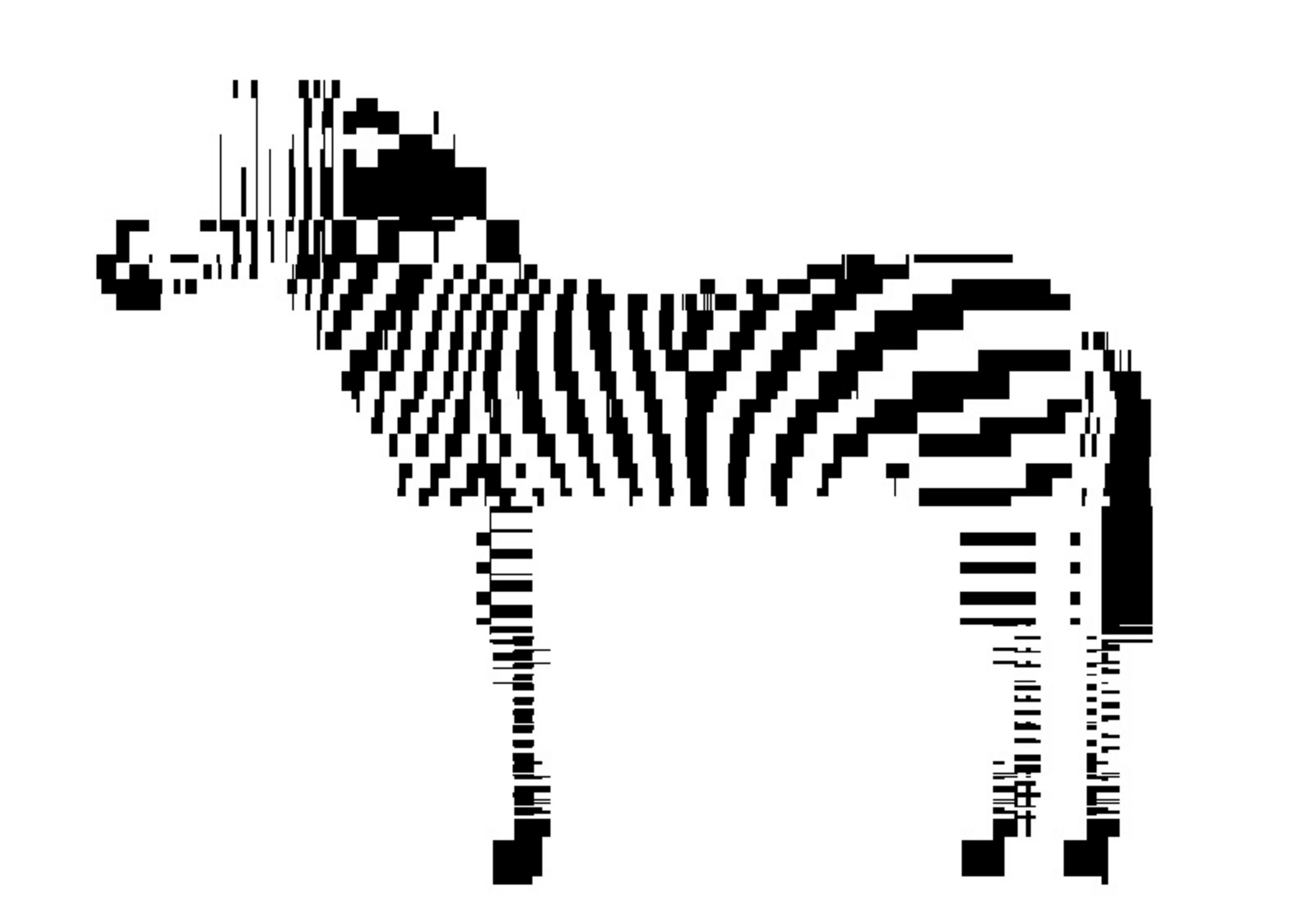} \\
& Error: 37288  & Error: 27180 & Error: 23763\\

          \midrule
\rotatebox{90}{Rank: 30 }&        \includegraphics[scale=0.08]{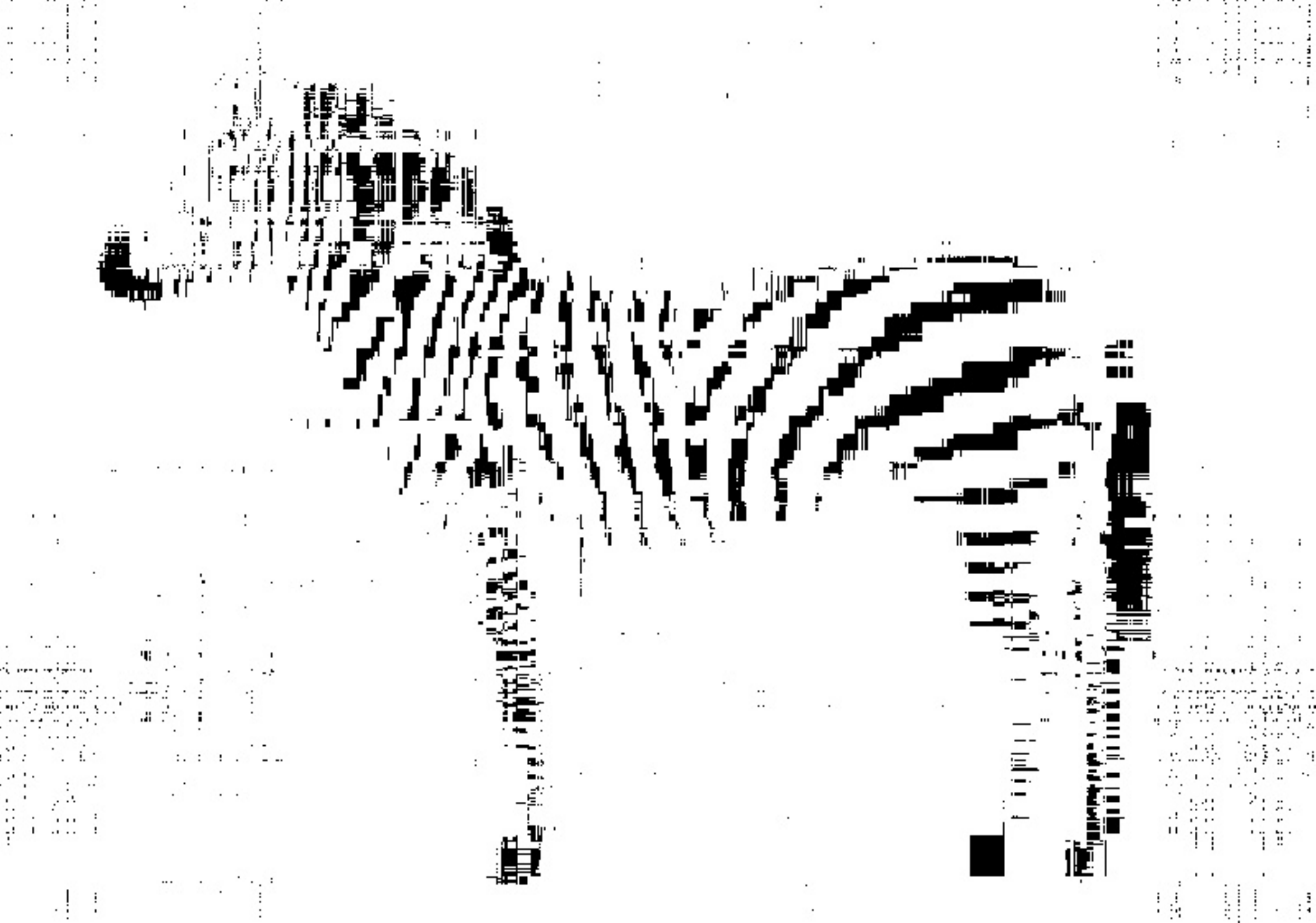} & \includegraphics[scale=0.08]{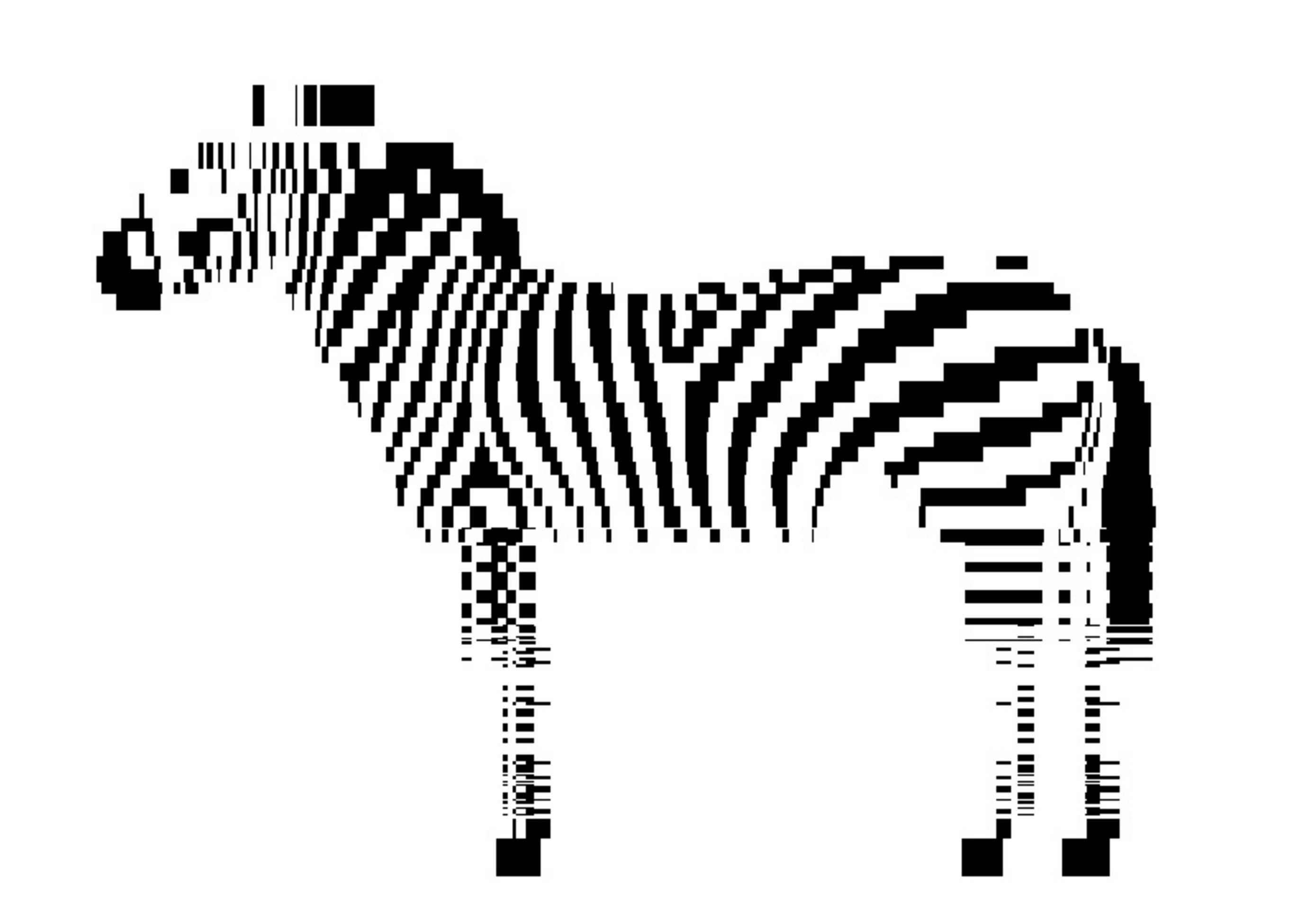} & \includegraphics[scale=0.08]{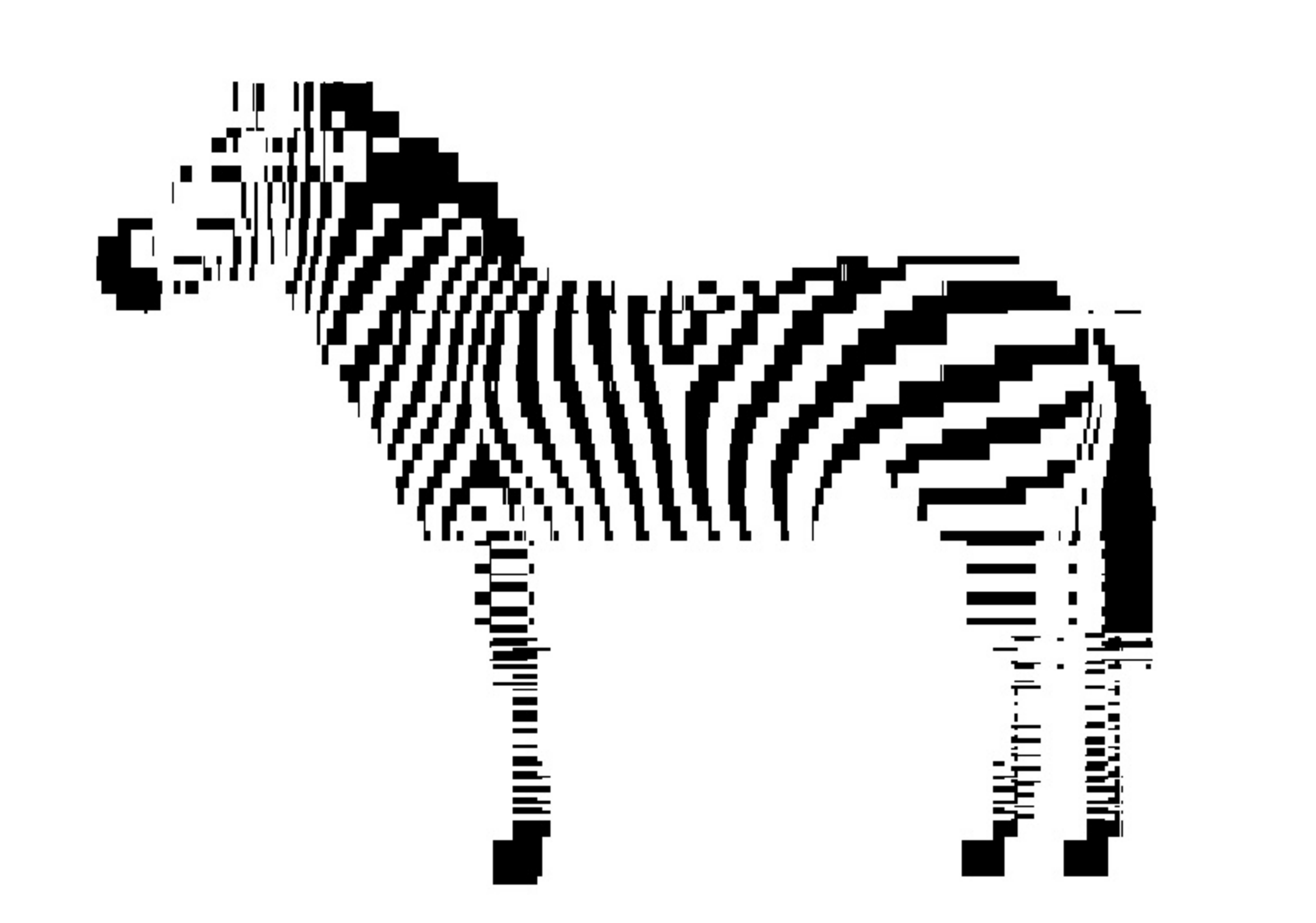} \\
& Error: 35938 & Error: 21115 & Error: 19081\\


          \midrule
          \rotatebox{90}{Rank: 40 }&  
              \includegraphics[scale=0.08]{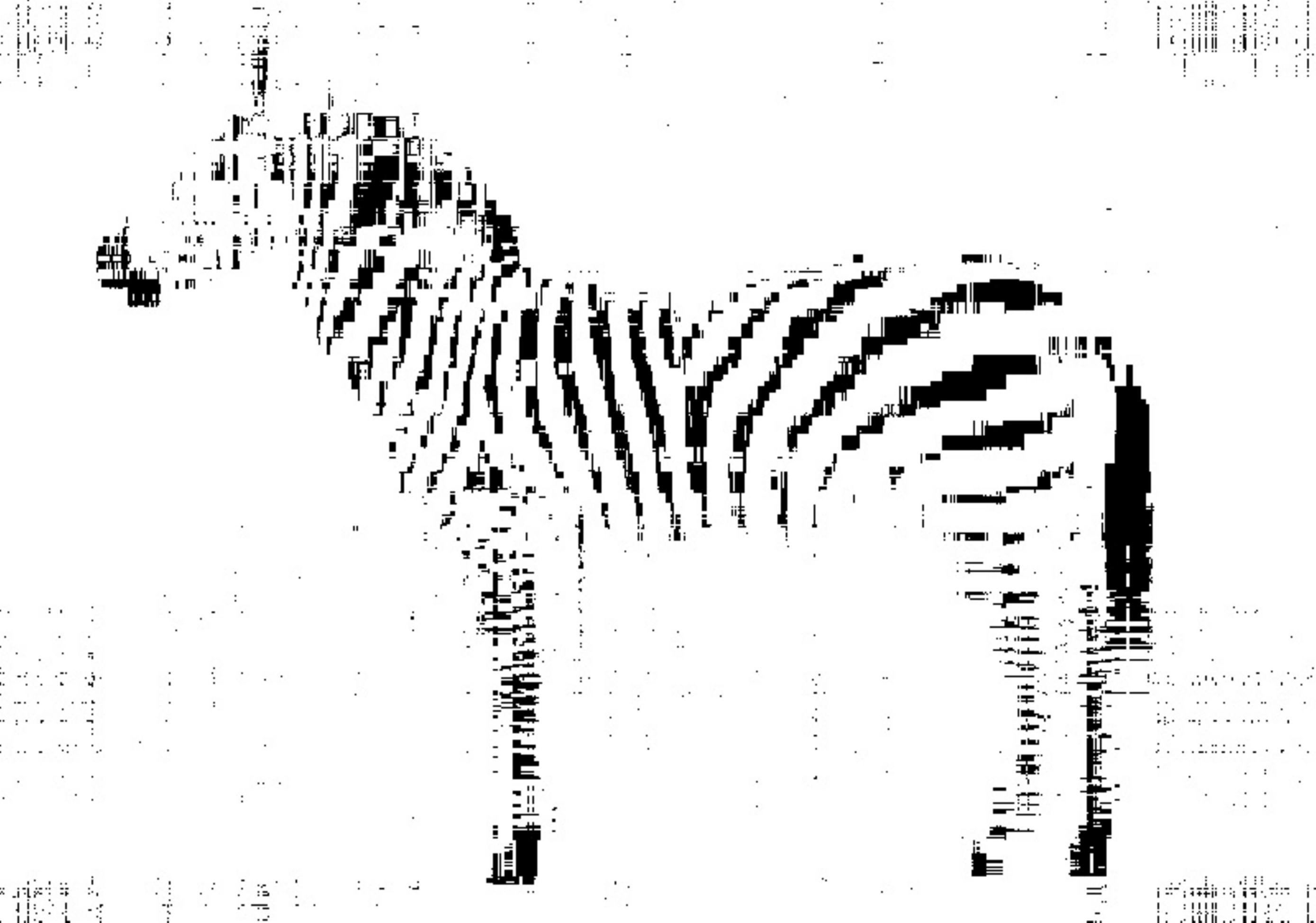} & \includegraphics[scale=0.08]{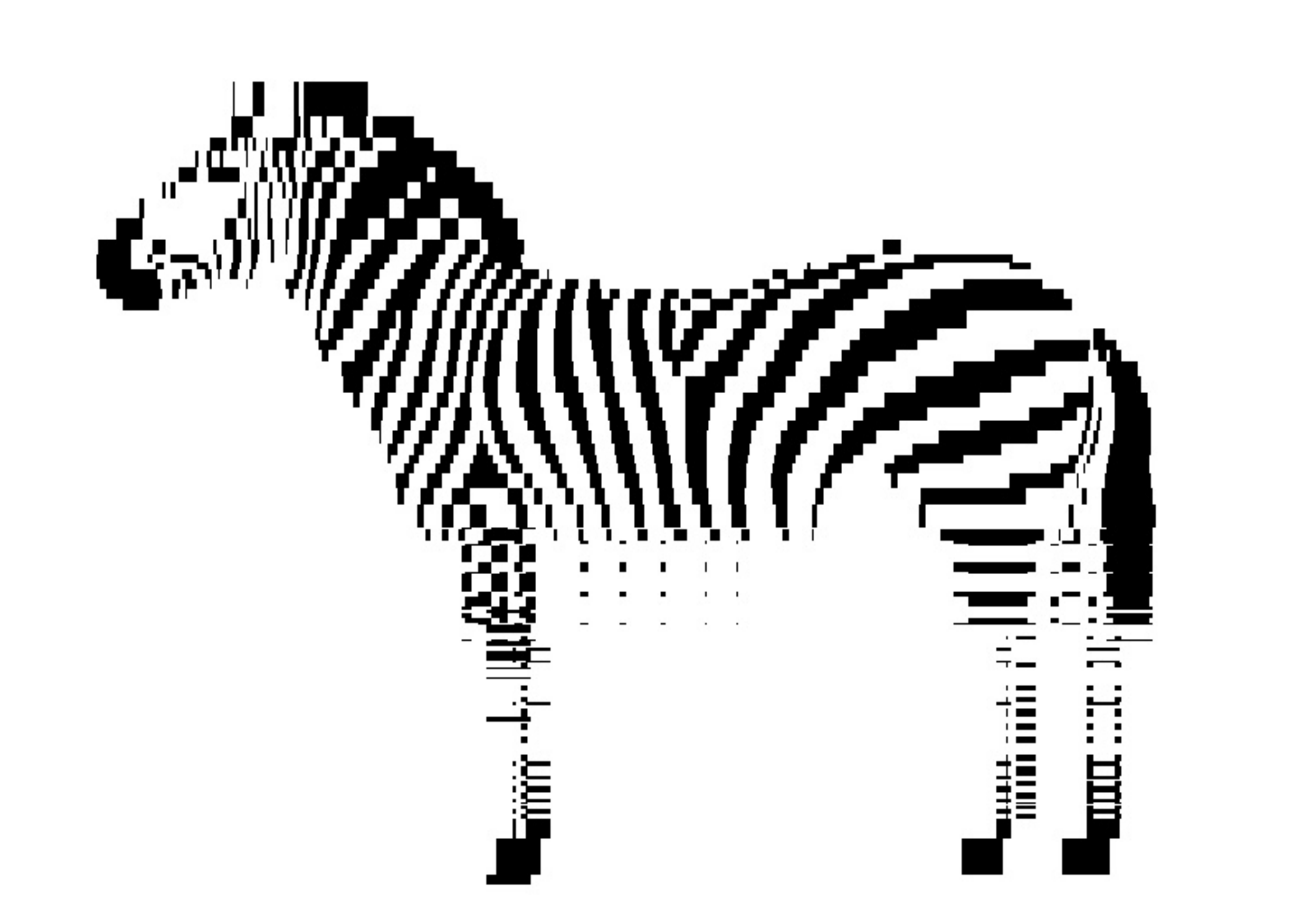} & \includegraphics[scale=0.08]{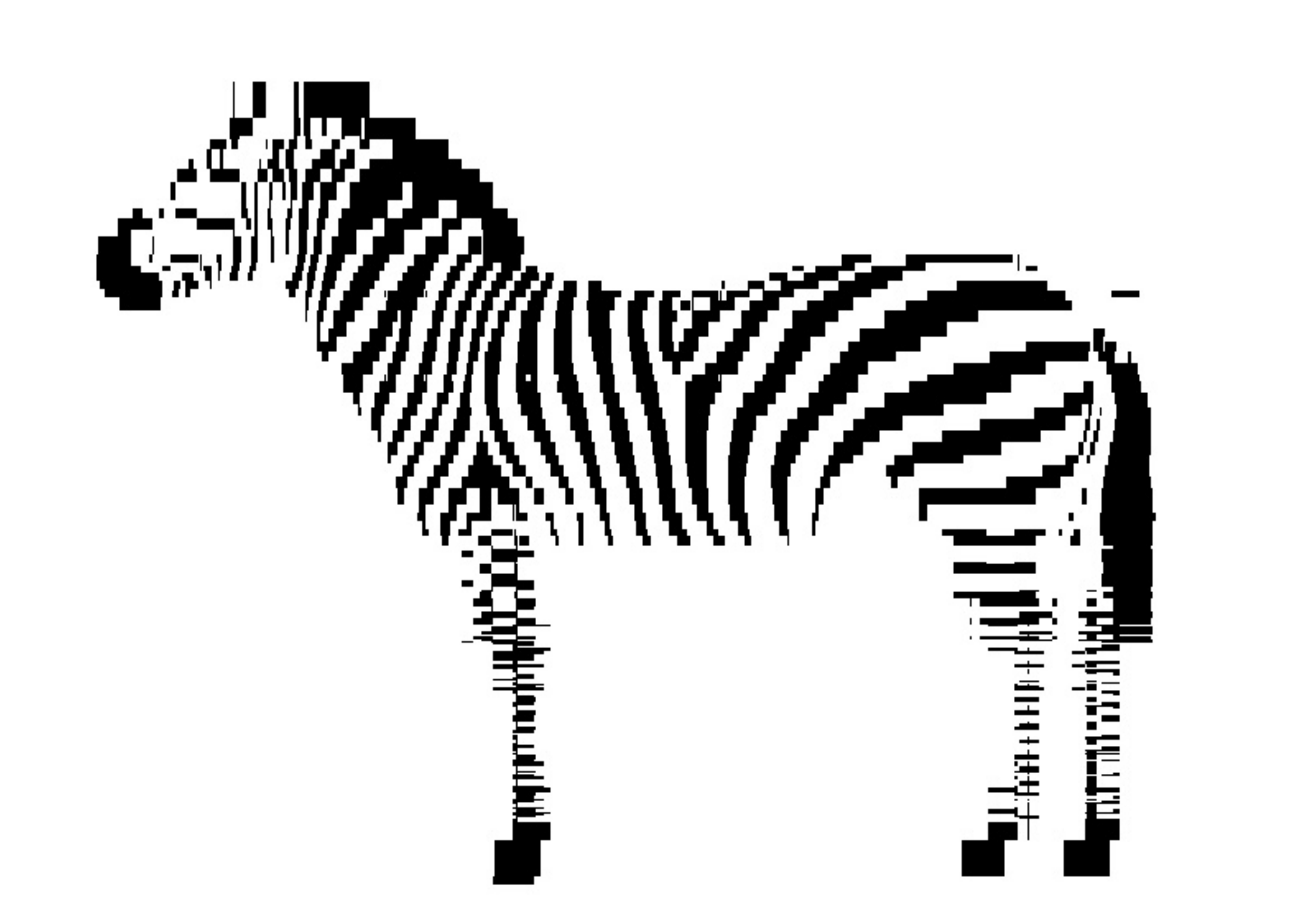} \\
&Error:  34414 & Error: 17502 & Error: 15723\\


          \midrule
\rotatebox{90}{Rank: 50}&           \includegraphics[scale=0.08]{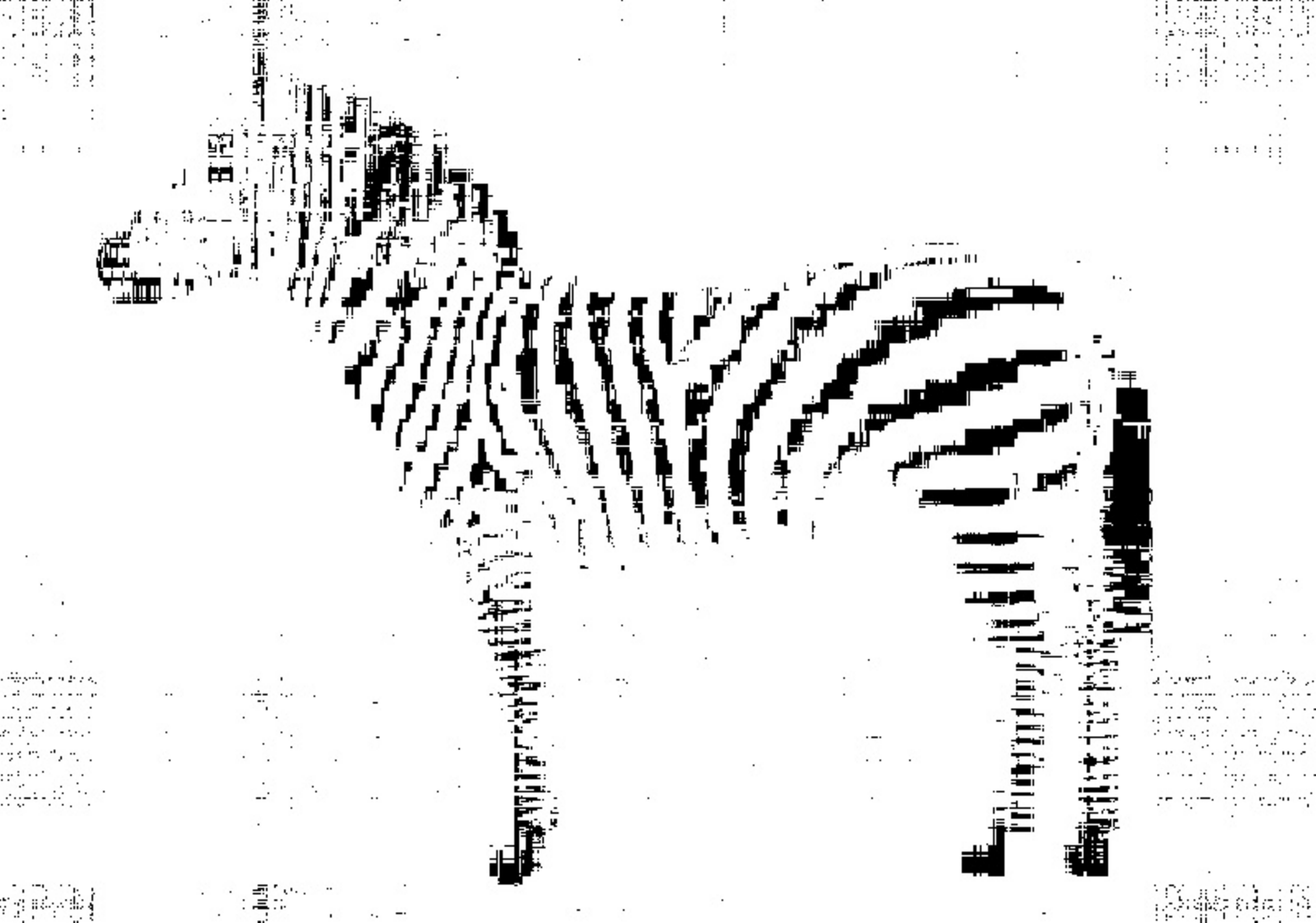} & \includegraphics[scale=0.08]{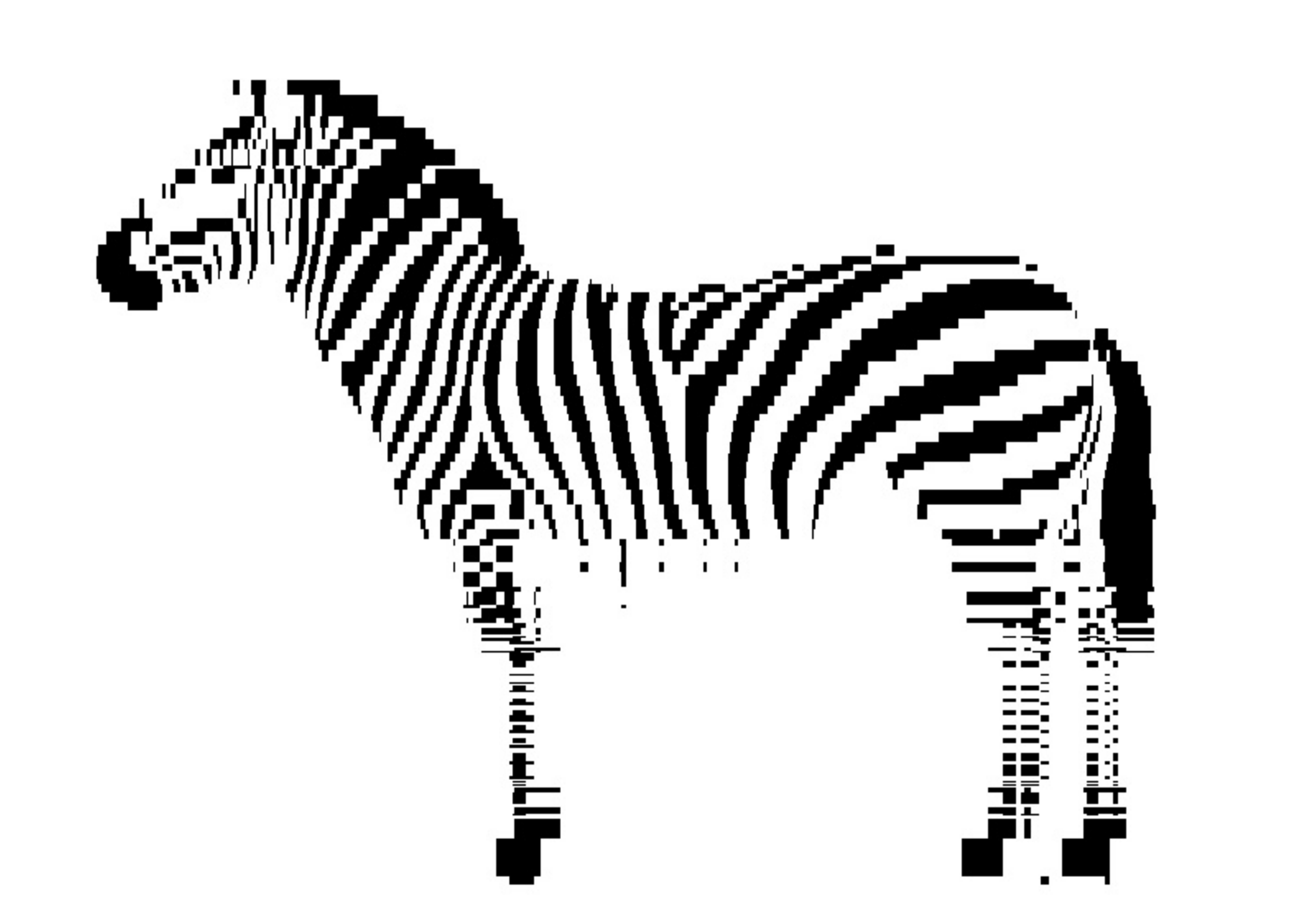} & \includegraphics[scale=0.08]{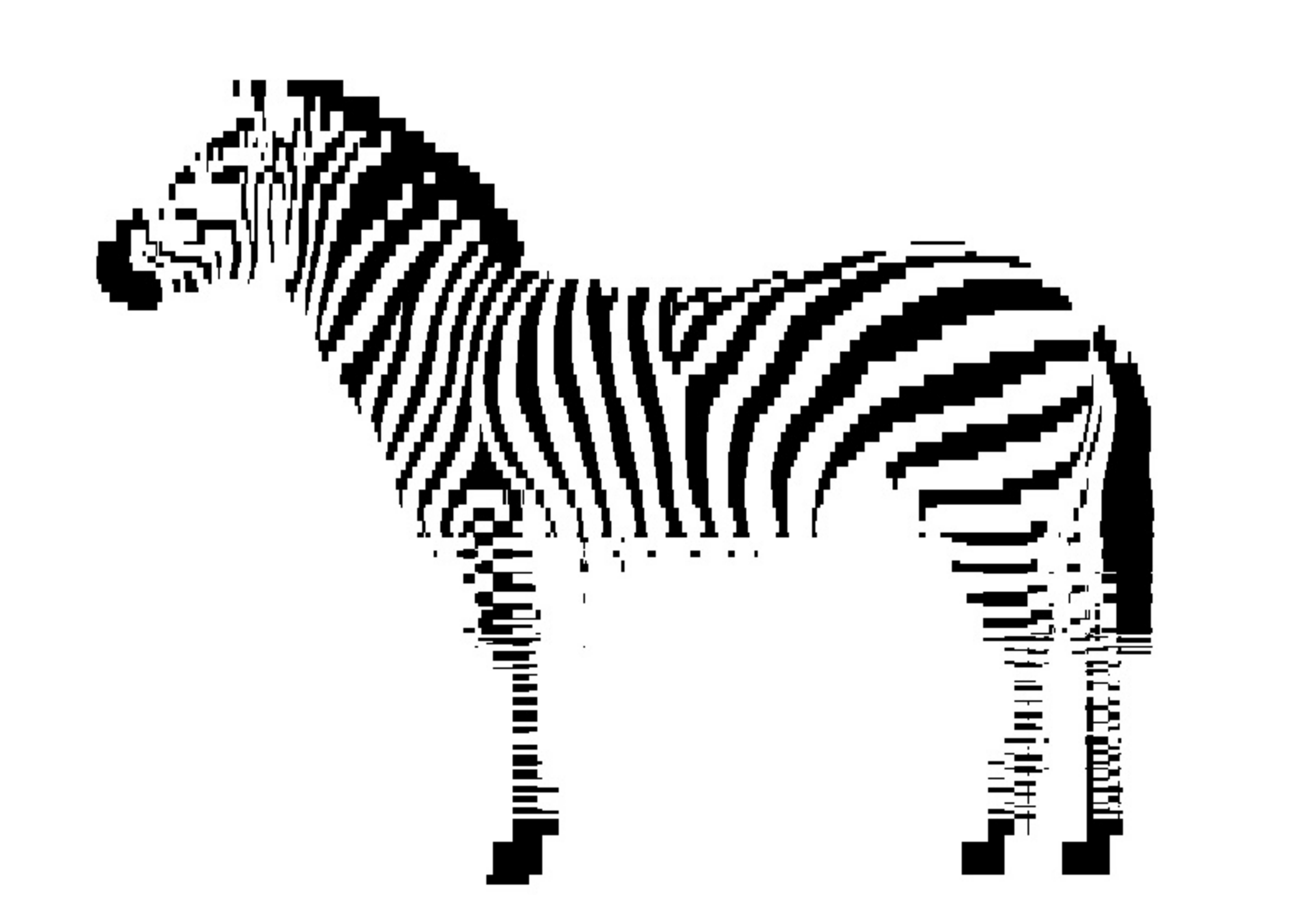} \\
& Error:  34445 & Error: 14974 & Error: 13684\\


          \midrule
   \rotatebox{90}{Rank: 100}&          \includegraphics[scale=0.08]{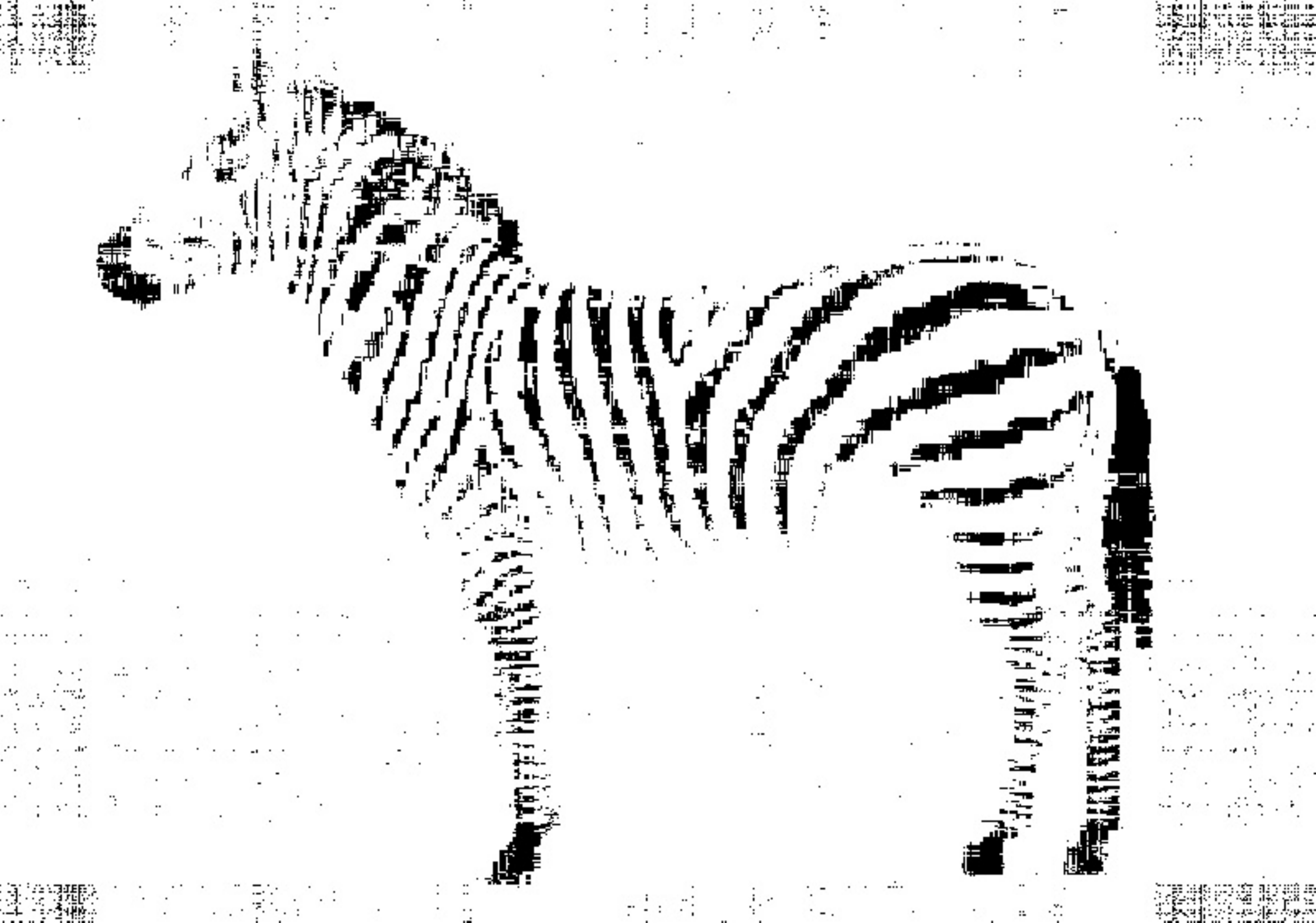} & \includegraphics[scale=0.08]{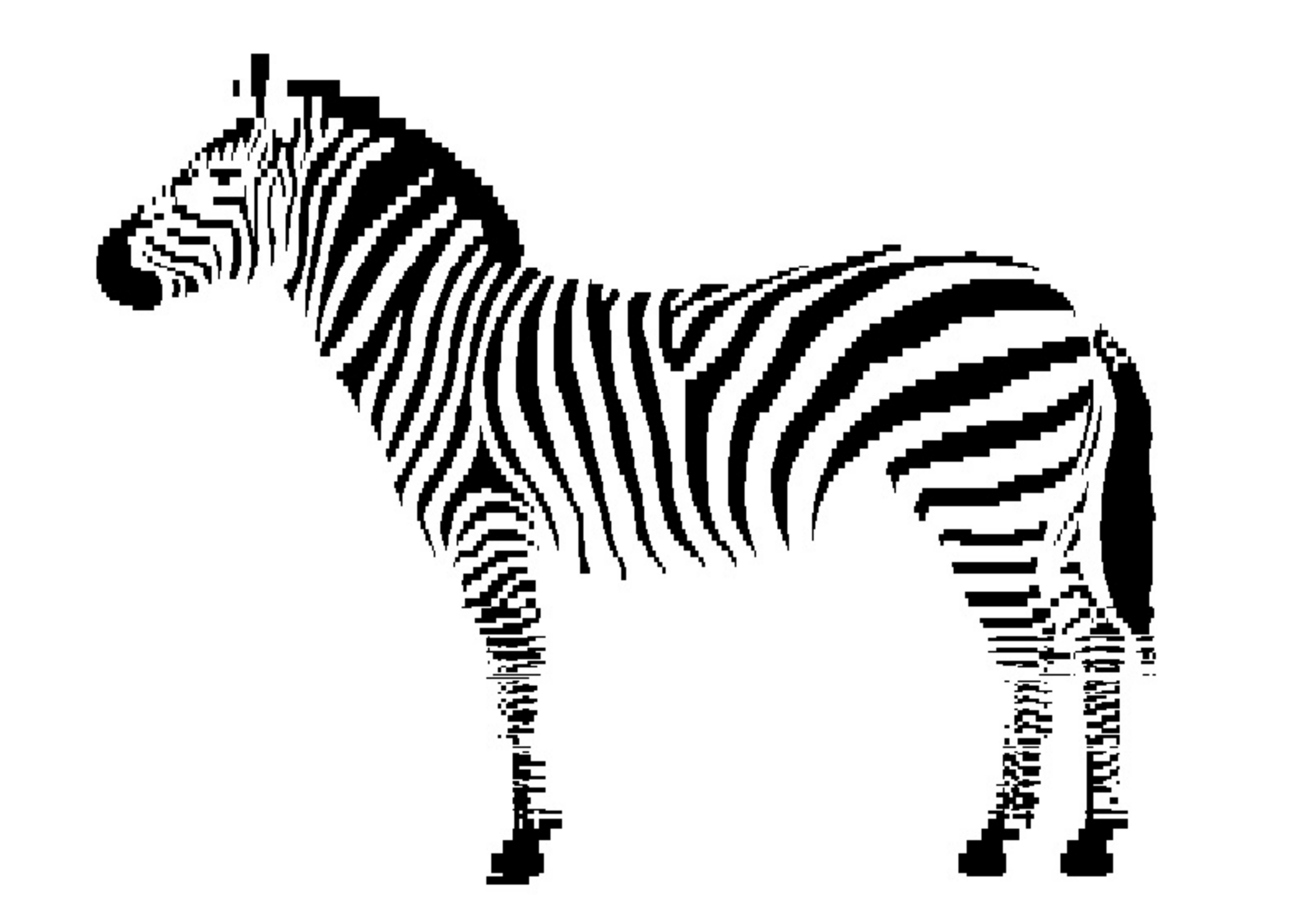} & \includegraphics[scale=0.08]{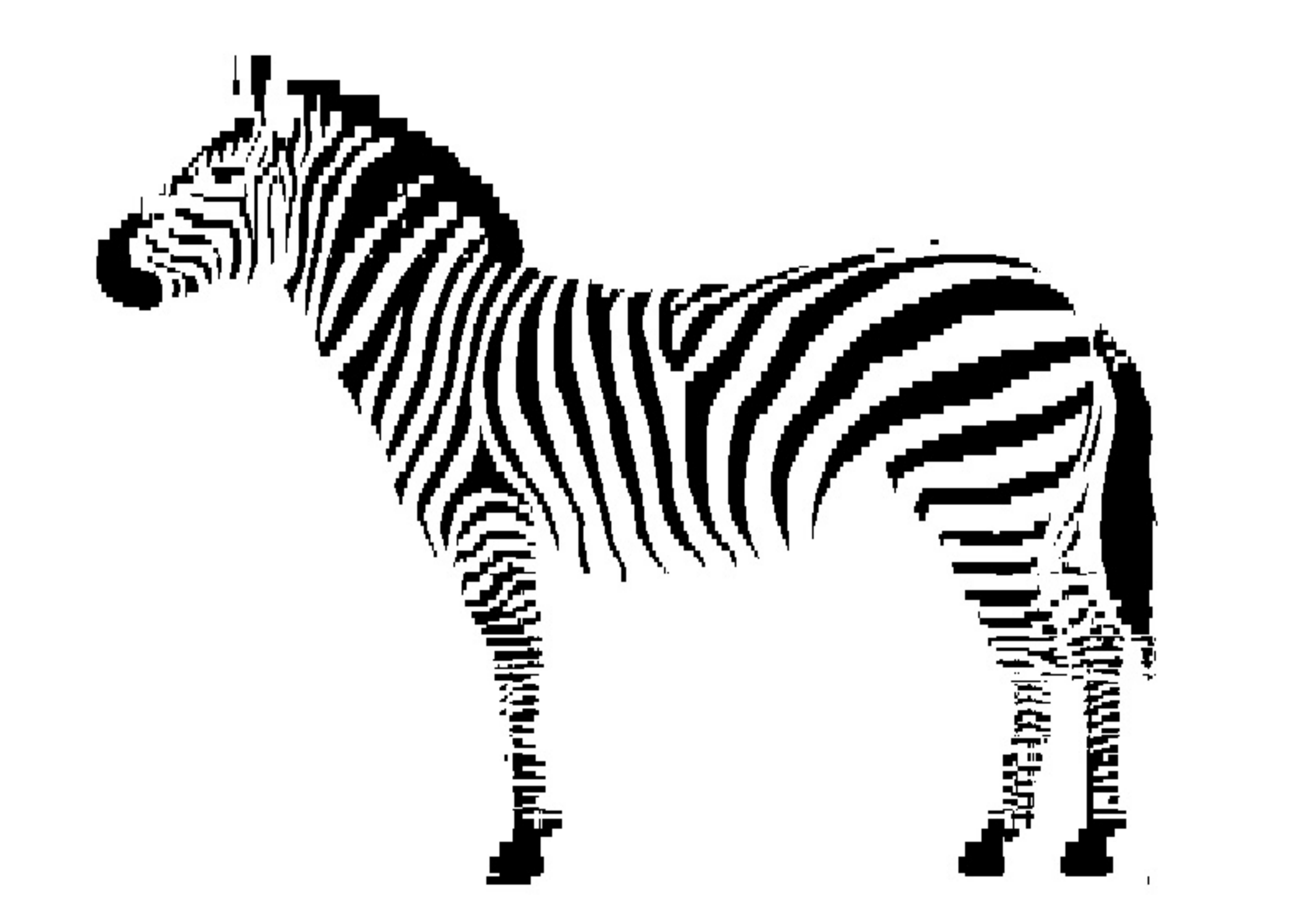} \\
& Error: 33445 & Error: 8709 & Error: 8529\\          
            \bottomrule
        \end{tabular}
        }
    \end{table}

\begin{figure}[t]
\centering
\includegraphics[width = 8cm]{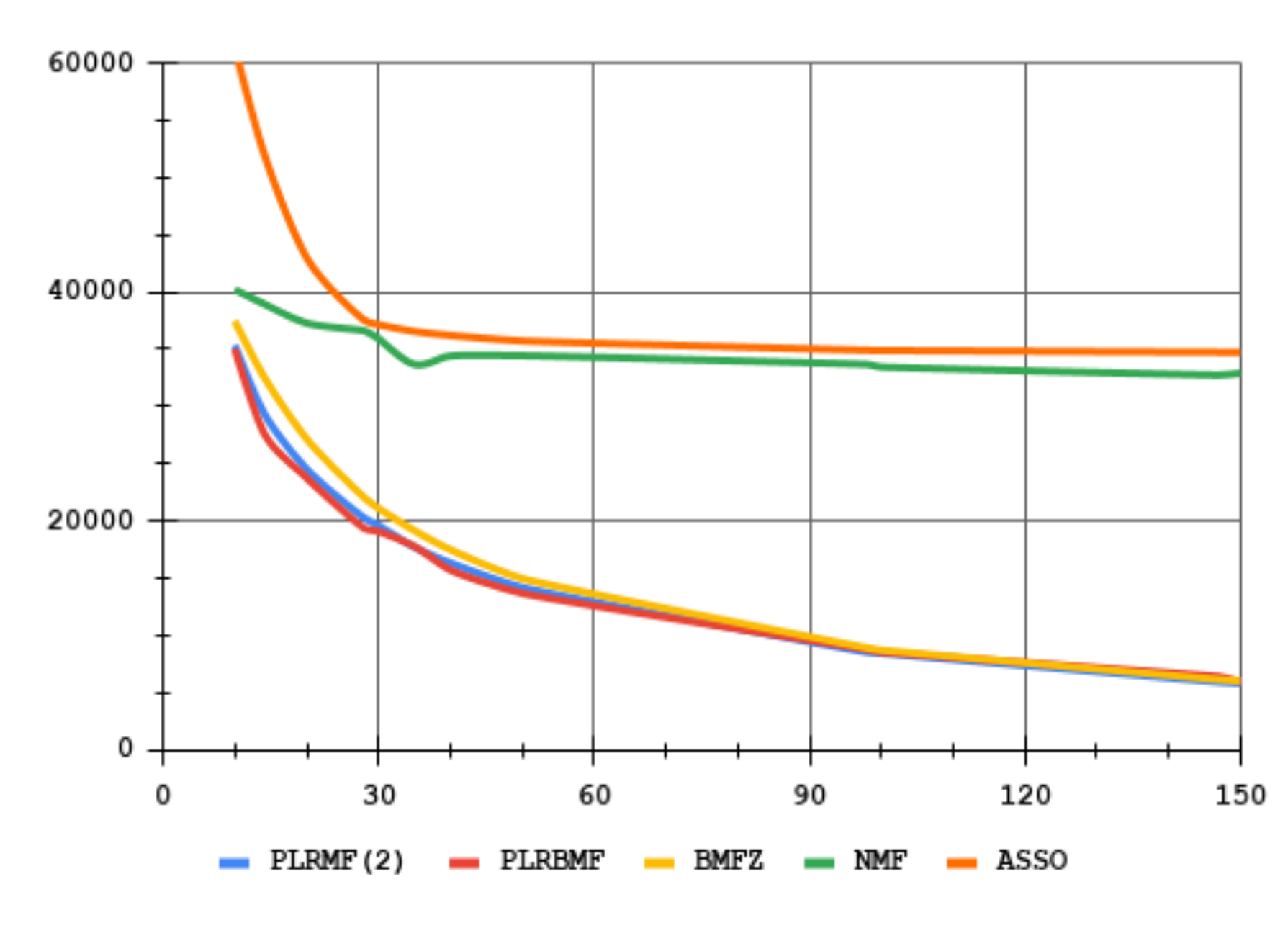}
\caption{Graph depicting performance of our algorithms compared with others on the image mentioned in Table~\ref{tbl:zebra}.}
\label{graph:zebra}
\end{figure}

We analyse performance of our algorithms on binary and gray scale images. 
Table~\ref{tbl:zebra} shows the performance of {\sf PLRBMF} compared with {\sf NMF} and {\sf BMFZ}. We would like to mention that both our algorithms {\sf PLRBMF} and {\sf PLRMF($2$)} work better than {\sf Asso}, {\sf NMF}, and {\sf BMFZ}. Here, we included results of {\sf PLRBMF}, {\sf NMF}, and {\sf BMFZ}. For the ranks mentioned in the table, {\sf PLRBMF} performs better than {\sf PLRMF($2$)} and both these algorithms perform better than the other algorithms mentioned here. For the inputs in Table~\ref{tbl:zebra}, {\sf NMF} and {\sf BMFZ} perform better than {\sf Asso}. So we compared {\sf PLRBMF} with {\sf NMF} and {\sf BMFZ}. 
The performance  of all the above algorithms are summarized in Figure~\ref{graph:zebra}. 
Notice that {\sf NMF} gives two no-negative real matrices $\bfU$ and $\bfV$. We round the values in these matrices to $0$ and $1$ by choosing a best possible threshold that minimizes error in terms of $\ell_1$-norm.  After rounding the values in the matrices $\bfU$ and $\bfV$ we get two  binary matrices $\bfU'$ and $\bfV'$.  Then we  multiply $\bfU'$ and $\bfV'$ in GF(2) to get the output matrix.

In  Table \ref{tbl:mri}, we have taken an MRI Grayscale image, which consists of only 7 shades, as an input matrix whose dimensions are 266 $\times$ 247. This image is obtained by changing values between $0-255$ to $7$ distinct values of an image from \cite{brainimage}. 
For running {\sf PLRMF(7)} on this image, we mapped those 7 shades to the numbers 0, 1, …, 6 so that we can get a matrix in {\sf GF(7)}, then we run {\sf PLRMF(7)} on the modified matrix and again remapped the entries of output matrix to their respective shades, thereby, getting an output image and then we calculated error (sum of absolute errors) between input matrix and output matrix. 
For {\sf NMF} we took the original matrix as the input (i.e., the matrix with values from $0-255$). Also, since {\sf NMF} gives us a matrix with real values, we have rounded the matrix values to the nearest integer and called it the output matrix and then calculated error. 
We can see clearly from Table \ref{tbl:mri} and Figure~\ref{graph:mri} that {\sf PLRMF(7)}  which works only on finite fields is performing far better than {\sf NMF}.
It is important to note here that the input rank used in both the algorithms {\sf PLRMF(7)} and {\sf NMF} in Table \ref{tbl:mri} varies from 20 to 100. However, since the output of {\sf NMF} algorithm is two real-valued factor matrices $\bfU$ and $\bfV$, and to get the output image the real values in $\bfU \bfV$ were rounded up to the nearest integer. Because of this, the rank of the output matrix is altered which is mentioned below the images in {\sf NMF} column as Real Rank. Due to the increase in rank and the values in the output matrix of {\sf NMF} can have much more than 7 distinct values, the images under the {\sf NMF} column look better despite having higher error. 
To get the bounded rank output by the method of {\sf NMF},  when we  round the elements of the factor matrices to the nearest integer,  the output matrix has all values zeros, resulting in a image with all pixels black.  


\begin{figure}[t]
\centering
\includegraphics[width = 8cm]{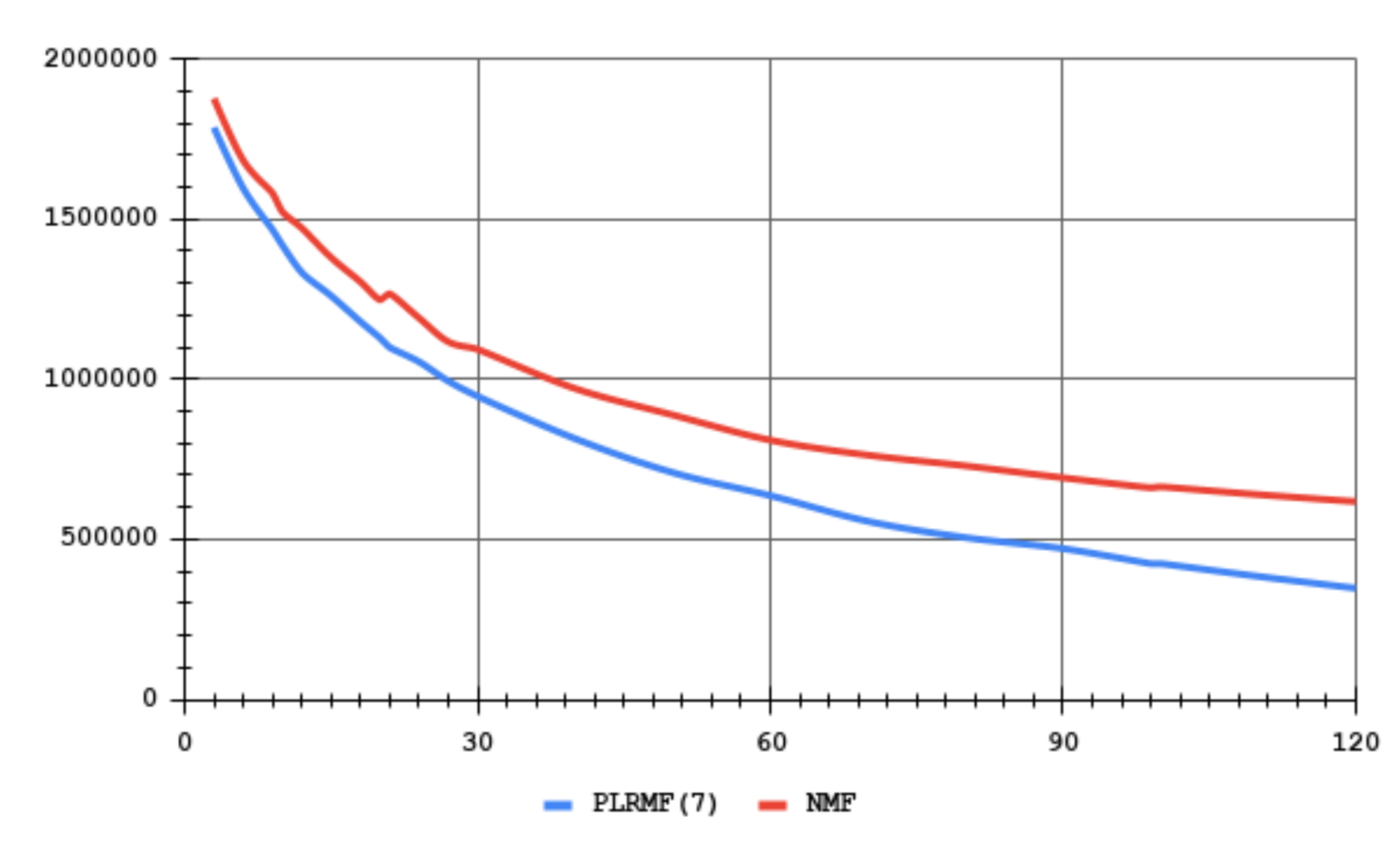}
\caption{Graph depicting performance of our algorithm compared with {\sf NMF} on the image mentioned in Table~\ref{tbl:mri}.}
\label{graph:mri}
\end{figure}

	\begin{table}[H]

	        \caption{Performance of our algorithm {\sf PLRMF(7)} compared to {\sf NMF}. The dimension of the image is $266\times 247$. Here Alg. Rank is the input parameter $r$ to the algorithms. 
        }
        \label{tbl:mri}

		\centering
		\begin{tabular}{|c|c|c|}
			\toprule
			&{\sf PLRMF(7)} & {\sf NMF}  \\
			\midrule
    \rotatebox{90}{Original image}&          \includegraphics[scale=0.275]{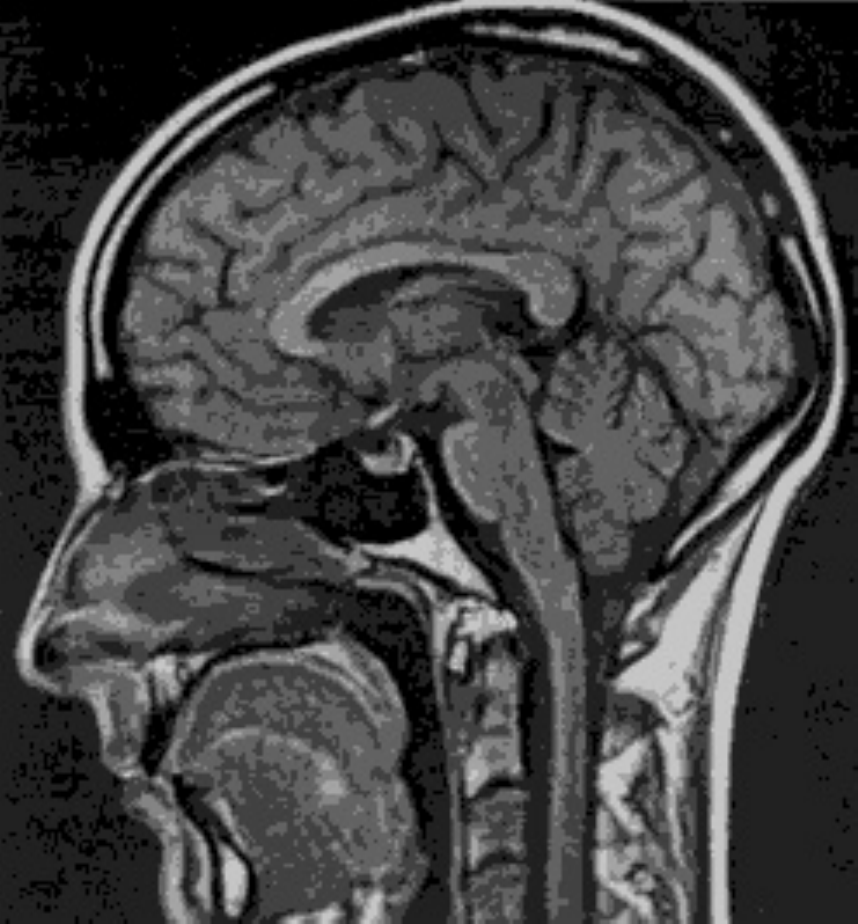} & \includegraphics[scale=0.275]{original_7_color_image} \\
          \midrule

   \rotatebox{90}{Alg. Rank: 10}&         \includegraphics[scale=0.275]{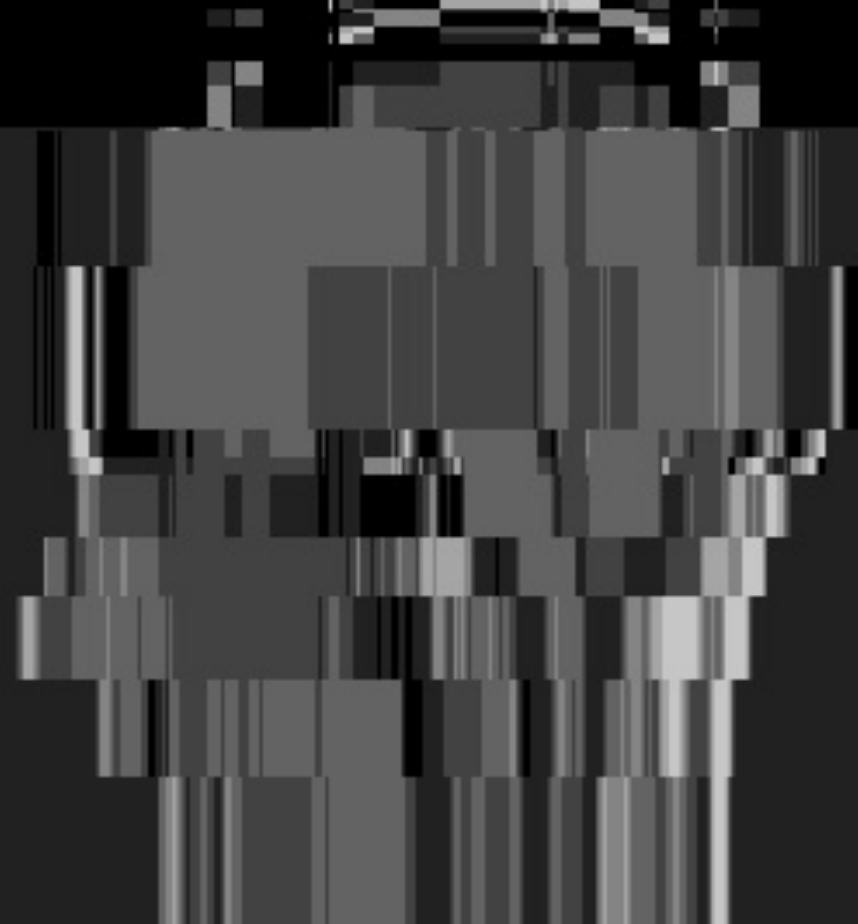}  & \includegraphics[scale=0.275]{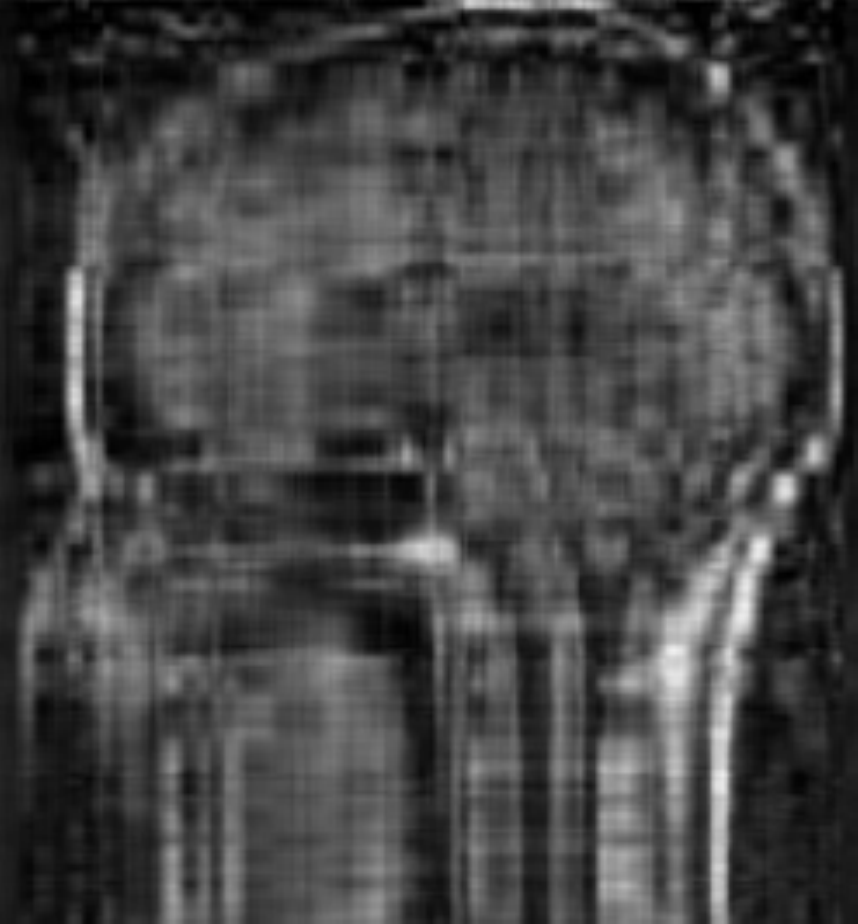} \\
& Error: 1419231 & Error: 1523339 \\
& GF(7) Rank : 10 & Real Rank : 247 \\

       \midrule
\rotatebox{90}{Alg. Rank: 20 }&           \includegraphics[scale=0.275]{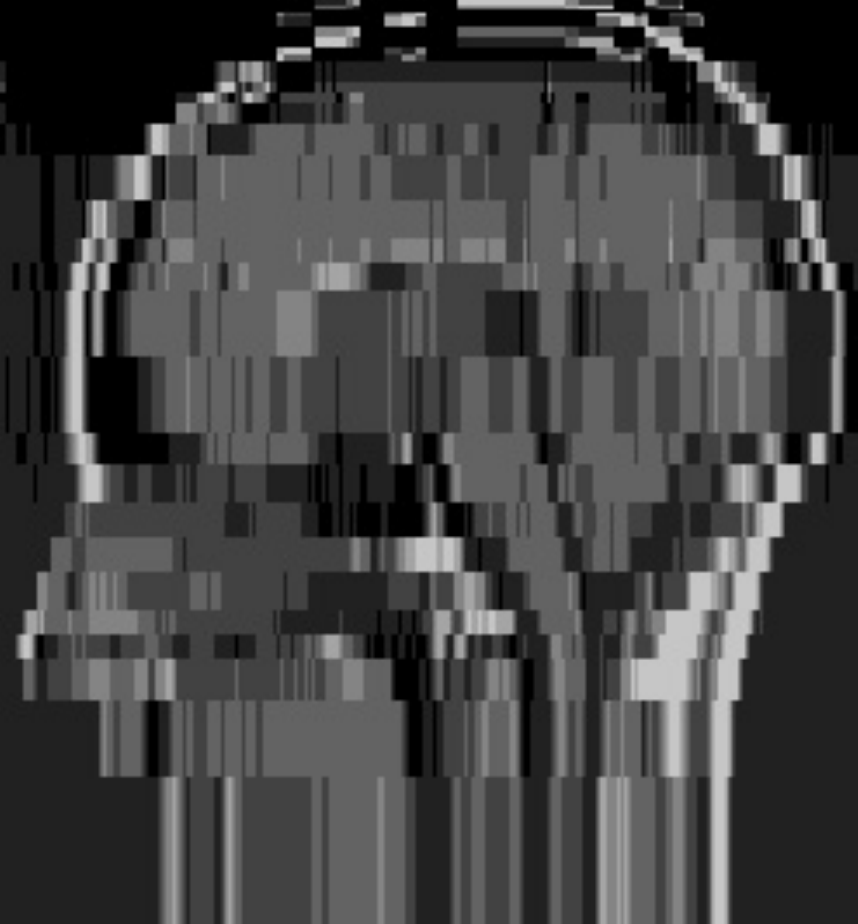} & \includegraphics[scale=0.275]{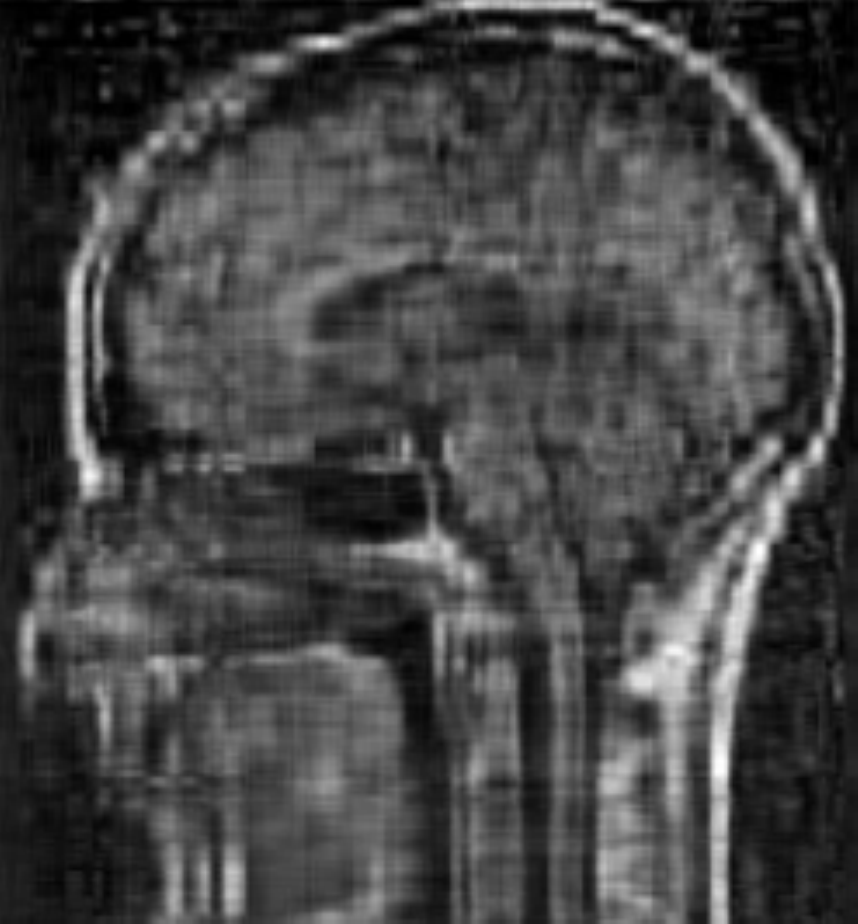} \\
& Error: 1129755 & Error: 1248532 \\
& GF(7) Rank : 20 & Real Rank : 247 \\
          \midrule
\rotatebox{90}{Alg. Rank: 30 }&        \includegraphics[scale=0.275]{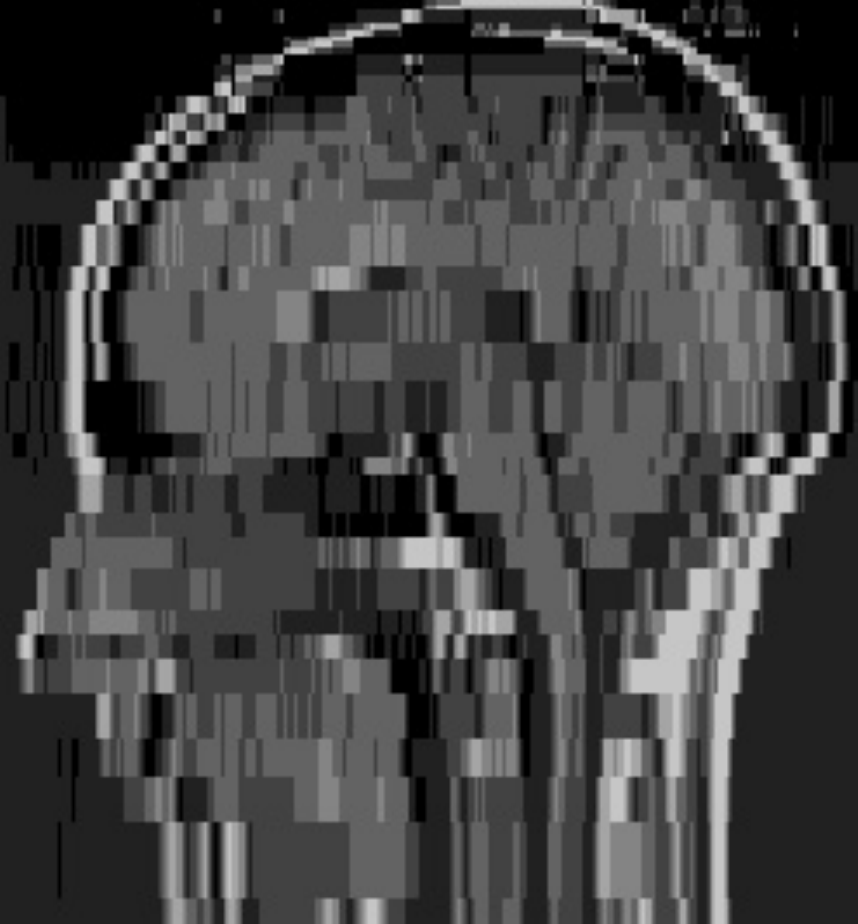} & \includegraphics[scale=0.275]{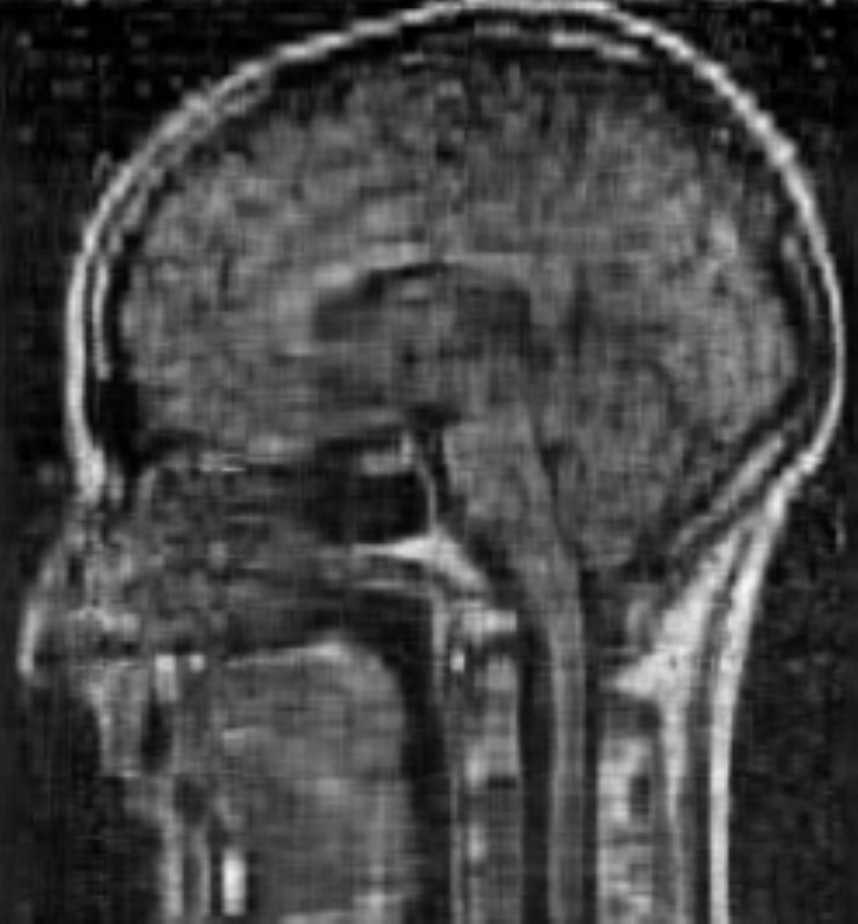} \\
& Error: 947298 & Error: 1093683 \\
& GF(7) Rank : 30 & Real Rank : 247 \\


%

          \midrule
\rotatebox{90}{Alg. Rank: 50}&           \includegraphics[scale=0.275]{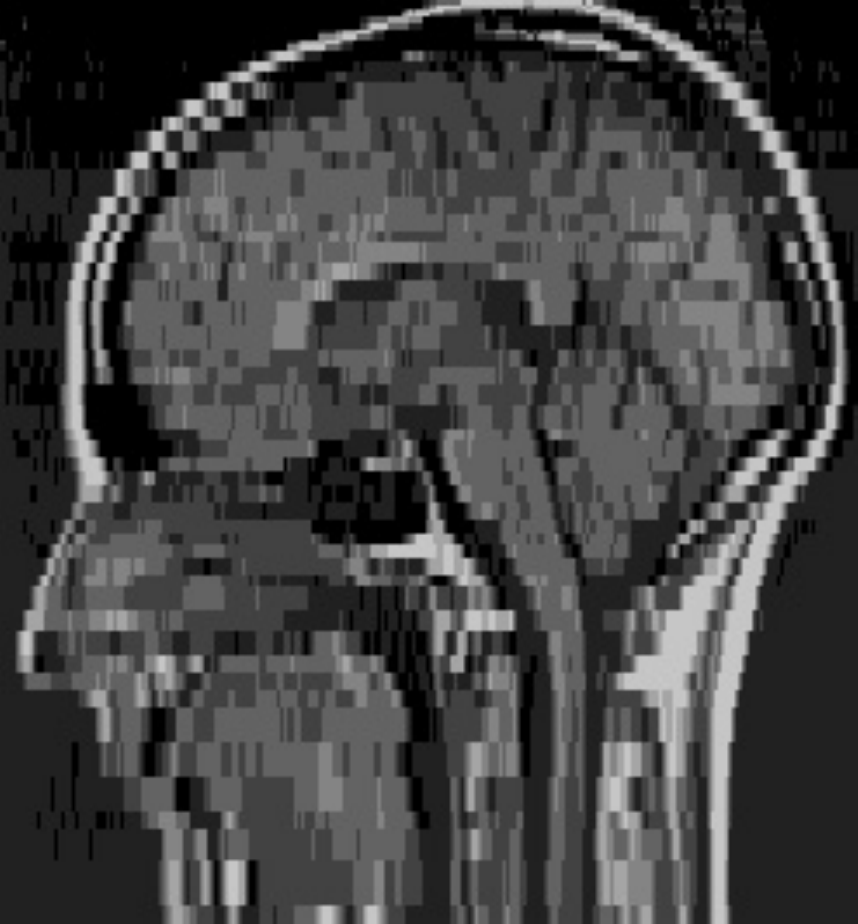} & \includegraphics[scale=0.275]{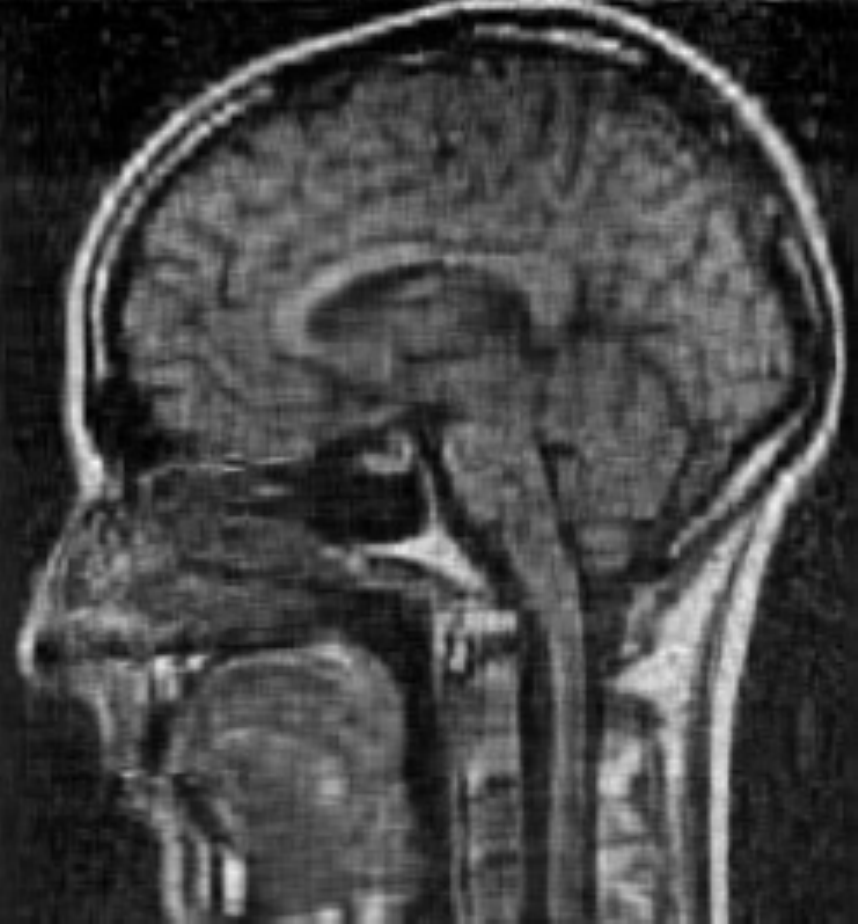} \\
& Error: 709203 & Error: 888976  \\
& GF(7) Rank : 50 & Real Rank : 247 \\

          \midrule
   \rotatebox{90}{Alg. Rank: 100}&          \includegraphics[scale=0.275]{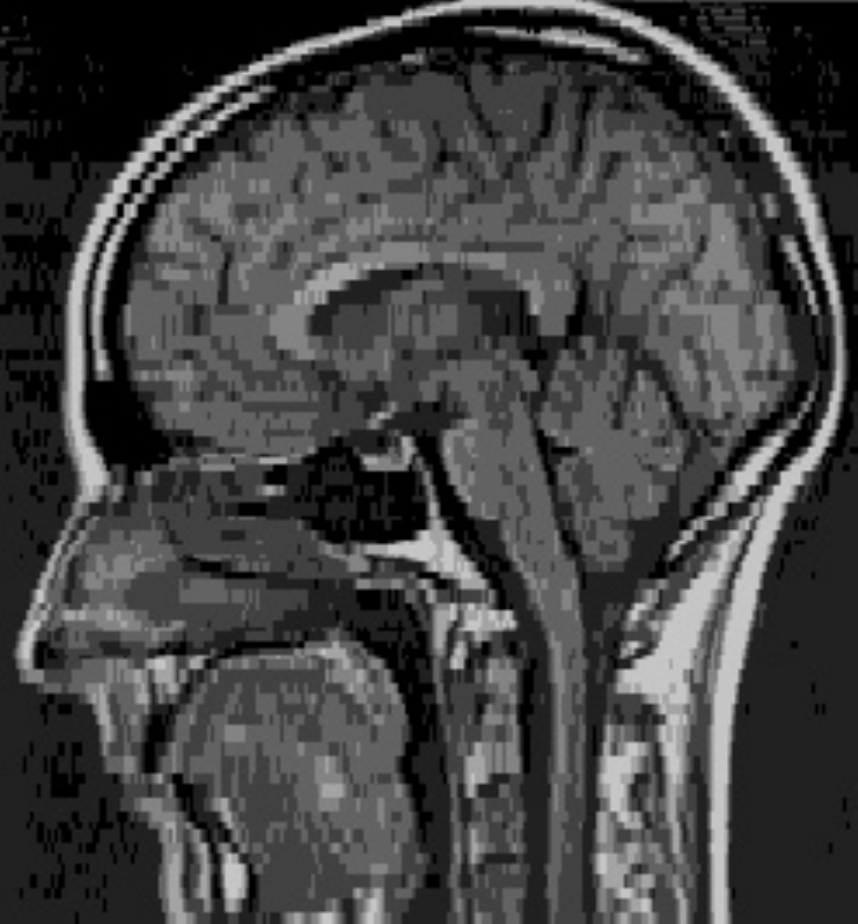} & \includegraphics[scale=0.275]{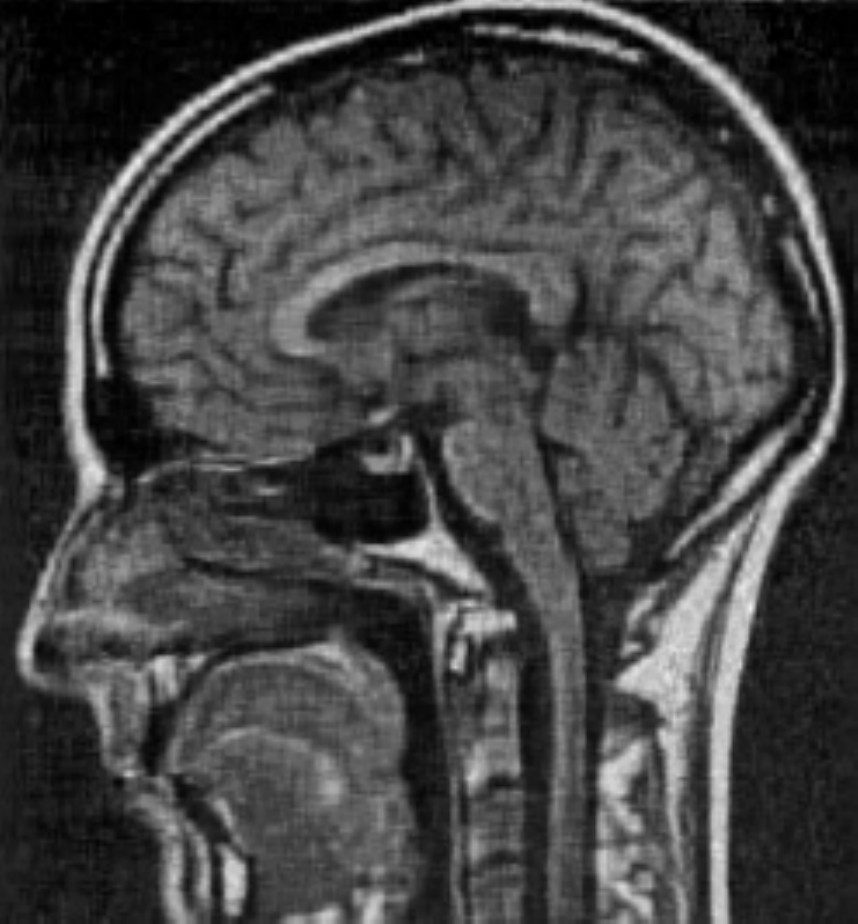} \\
& Error: 424776 & Error: 664537 \\
& GF(7) Rank : 100 & Real Rank : 247 \\
            \bottomrule
        \end{tabular}
	\end{table}


\begin{table}[t]
    \caption{Comparison on movie-lens data where the ranks considered are between $1$ and $30$, on matrix of dimension $43\times 134$.}
    \label{tablemovielens}
    \centering
      \begin{tabular}{|c|c|c|c|c|c|c|}
    \specialrule{.2em}{.1em}{.1em}
    Rank     & 1     & 2     & 3     & 6     \\
    \specialrule{.2em}{.1em}{.1em}
    {\sf PLRMF($11$)} &
    4981 &	4527 &	4273 &	3791  \\
    \hline
    {\sf NMF} & 5257.8 &	5201 &	5015 &	4652.7 \\
    \specialrule{.2em}{.1em}{.1em}
\end{tabular}

      \begin{tabular}{|c|c|c|c|c|}
    \specialrule{.2em}{.1em}{.1em}
    Rank     & 9  & 12    & 15 & 18         \\
    \specialrule{.2em}{.1em}{.1em}
    {\sf PLRMF($11$)} &	3422 &	3070 &
    2935 & 2477 \\
    \hline
    {\sf NMF} & 3924 &	3628	& 4305.6 &	4066.4\\
    \specialrule{.2em}{.1em}{.1em}
\end{tabular}

     \begin{tabular}{|c|c|c|c|c|}
    \specialrule{.2em}{.1em}{.1em}
    Rank      &	 21 & 24 & 27 &30           \\
    \specialrule{.2em}{.1em}{.1em}
    {\sf PLRMF($11$)}  &	2145 &	1935 &	1569 &	1129 \\
    \hline
    {\sf NMF}  &	3556 &	3336 &	3090 &	2982 \\
    \specialrule{.2em}{.1em}{.1em}
\end{tabular}

\end{table}


\begin{figure}
\centering
\includegraphics[width = 8cm]{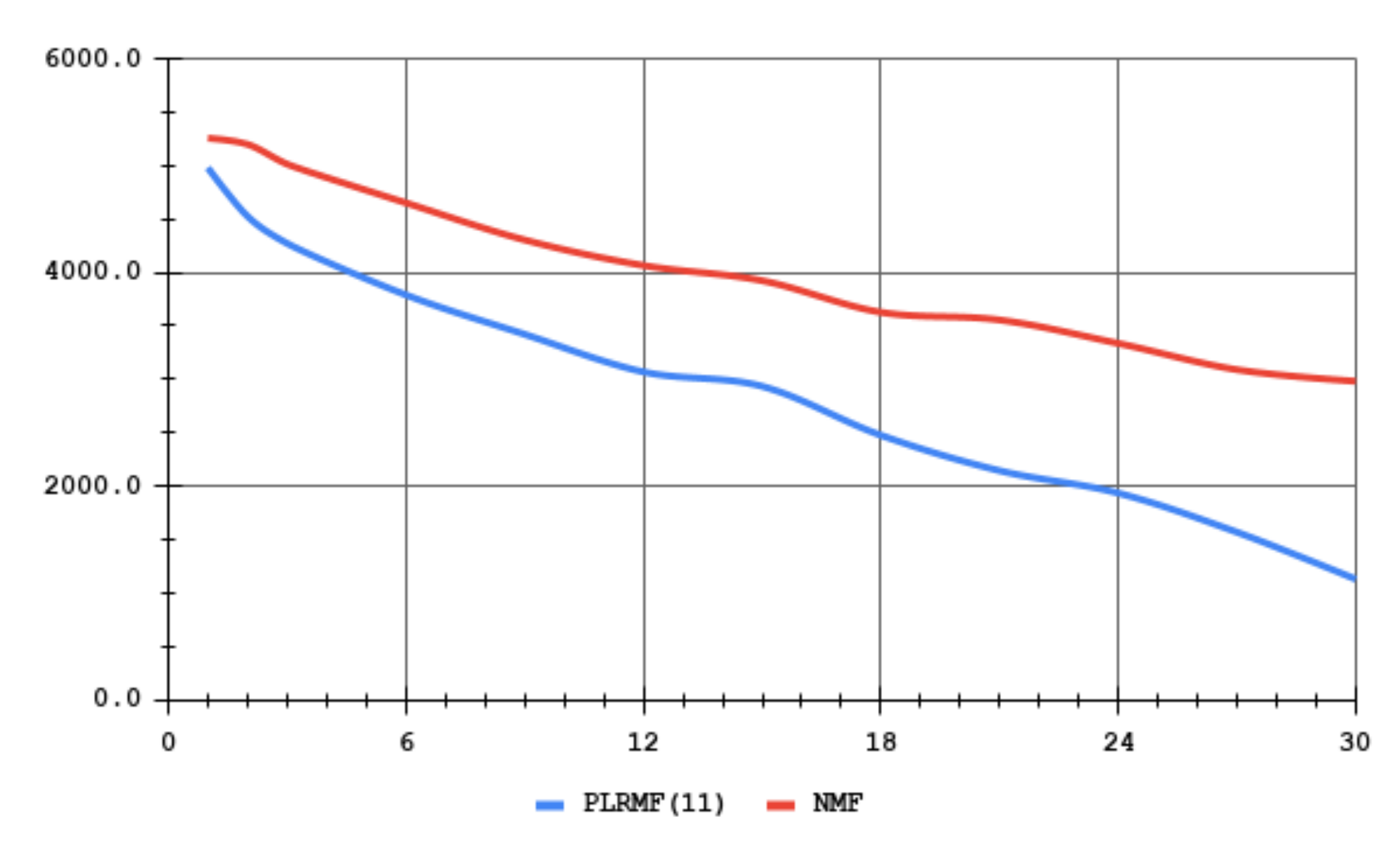}
\caption{Graph on real data where the entries in the table are entries in user vs movie matrix of movie lens data set where each user has rated at least 200 movies and each movie is rated by at least 150 users. The dimension of the matrix is $43\times 134$.}
\label{fig:movielens}
\end{figure}

We analyze {\sf PLRMF($11$)} on movielens data set~\cite{10.1145/2827872} and compare it with {\sf NMF} 
and 
the performance can be found in Figure~\ref{fig:movielens} and Table~\ref{tablemovielens}. 
The performance of {\sf PLRMF($11$)} against {\sf NMF}, monotonically increasing with respect to rank. 
We obtain more than $14\%$ improvement on rank $3$,  more than $24\%$ improvement on rank $12$, and 
more than $39\%$ improvement on rank $21$ against {\sf NMF}.

\section{Conclusion}

In this work we designed  heuristic algorithms for BMF and $\Bbb{F}_p$-Matrix Factorization that are inspired by the theoretical algorithms for the same.  Even though our algorithms have less error compared with the benchmark algorithms we considered,  the later run faster as they are truely polynomial time algorithms.  It is interesting research direction to improve the running time of the algorithm along with obtaining less error.

\bibliographystyle{plain}
\bibliography{book_pc,pca_with_outliers,k-clustering}



\end{document}